\newcommand\BibTeX{{\rmfamily B\kern-.05em \textsc{i\kern-.025em b}\kern-.08em
T\kern-.1667em\lower.7ex\hbox{E}\kern-.125emX}}
\newif\ifuseboldmathops
\newif\ifuseittextabbrevs
	\newcommand{\ie}{{\it i.e.}}
	\newcommand{\ie}{i.e.}
\newcommand{\argmax}{\mathop{\mathrm{argmax}}}
\newcommand{\truev}{\mathsf{true}}
\newcommand{\falsev}{\mathsf{false}}
\newcommand{\Always}{\Box \, }
\newcommand{\Eventually}{\Diamond \, }
\newcommand{\Next}{\bigcirc \, }
\newcommand{\until}{\mbox{$\, {\sf U}\,$}}
\newcommand{\abs}[1]{\lvert#1\rvert}
\newcommand{\calAP}{\mathcal{AP}}
\newcommand{\calG}{\mathcal{G}}
\newcommand{\calZ}{\mathcal{Z}}
\newcommand{\init}{{\iota}}
\renewcommand{\vec}[1]{\mathbf{#1}}
\acrodef{mdp}[MDP]{Markov Decision Process}
\acrodef{pomdp}[POMDP]{Partially Observable Markov Decision Process}
\acrodef{momdp}[MOMDP]{Multi-objective MDP}
\acrodef{ltl}[LTL]{Linear Temporal Logic}
\acrodef{ltlf}[LTL$_f$]{Linear Temporal Logic on Finite Traces}
\acrodef{dfa}[DFA]{Deterministic Finite Automaton}
\acrodef{tlmdp}[TLMDP]{Terminating Labeled Markov Decision Process}
\acrodef{pdfa}[PDFA]{Preference Deterministic Finite Automaton}
\newtheorem{theorem}{Theorem}
  \newtheorem{definition}{Definition}
 \newtheorem{example}{Example}
\newtheorem{problem}{Problem}
\newtheorem{lemma}{Lemma}
\newtheorem{assumption}{Assumption}
\newtheorem{proposition}{Proposition}
\newtheorem{remark}{Remark}
\newcommand{\calA}{\mathcal{A}}
\newcommand{\augnodes}{\mathcal{Y}}
\newcommand{\augnode}{Y}
\newcommand{\augedges}{\mathcal{E}}
\acrodef{smdp}[Semi-MDP]{Semi-Markov decision process}
\acrodef{mcts}[MCTS]{Monte Carlo tree search}
\acrodef{uct}[UCT]{Upper Confidence Bound 1 applied to trees}
\acrodef{scltl}[scLTL]{syntactically co-safe LTL}
\acrodef{ssp}[SSP]{Stochastic Shortest Path}
\acrodef{p2sg}[SG(2)]{Two-player Stochastic Game}
\acrodef{mc}[MC]{Markov chain}
\acrodef{prefltl}[TPL]{ Temporal Preference Logic}
\acrodef{tld}[TLwD]{Temporal Logic with Distributions}
\acrodef{mtl}[Metric TL]{Metric Temporal Logic}
\acrodef{sta}[STA]{Stochastic Timed Automaton}
\newcommand{\dist}{\mathcal{D}}
\renewcommand{\Pr}{\mathbf{Pr}}
\newcommand{\calM}{\mathcal{M}}
\newcommand{\calP}{\mathcal{P}}
 \newcommand{\lang}{\mathcal{L}}
\newcommand{\reach}[1]{\mathsf{reach}(#1)}
\acrodef{gpf}[GPF]{generalized preference formula}
\acrodef{cp}[CP]{ceteris paribus}
\acrodef{milp}[MILP]{Mixed-Integer Linear Programming}
\acrodef{dfa}[DFA]{Deterministic Finite Automaton}
\newcommand{\strictpref}{\triangleright}
\newcommand{\weakpref}{\trianglerighteq}
\newcommand{\prefnode}{W}
\newcommand{\prefnodes}{\mathcal{W}}
\newcommand{\pdfa}{\mathcal{A}}
\newcommand{\prefvertices}{\mathcal{W}}
\newcommand{\prefedges}{E}
\newcommand{\outcomes}{\mathsf{Outcomes}}
\newcommand{\Paths}{\operatorname{Paths}}
\newcommand{\trace}{\operatorname{trace}}
\acrodef{pltlf}[PLTLf]{Preference over linear temporal logic over finite words}
\title{Preference-Based Planning in Stochastic Environments: From  Partially-Ordered  Temporal Goals to Most Preferred Policies}
\author{Hazhar Rahmani\affilnum{1}, Abhishek N. Kulkarni\affilnum{2} and Jie Fu\affilnum{1}}
\affiliation{\affilnum{1}Department of Electrical and Computer Engineering, University of Florida, USA\\
\affilnum{2}Oden Institute for Computational Engineering and Sciences, University of Texas at Austin, TX, 78712, USA}
\email{\{h.rahmani, fujie\}@ufl.edu, abhishek.kulkarni@austin.utexas.edu.}
\begin{abstract}
    Human preferences are not always represented via complete linear orders: It is natural to employ partially-ordered preferences for expressing incomparable outcomes. In this work, we consider decision-making and probabilistic planning in stochastic systems modeled as Markov decision processes (MDPs), given a partially ordered preference over a set of temporally extended goals. Specifically, each temporally extended goal is expressed using a formula in Linear Temporal Logic on Finite Traces (LTL$_f$).  
    To plan with the partially ordered preference, we introduce order theory to map a preference over temporal goals to a preference over policies for the MDP. Accordingly, a most preferred policy under a stochastic ordering induces a stochastic nondominated probability distribution over the finite paths in the MDP. To synthesize a most preferred policy, our technical approach includes two key steps. In the first step, we develop a procedure to transform a partially ordered preference over temporal goals into a computational model, called preference automaton, which is a semi-automaton with a partial order over acceptance conditions. In the second step, we prove that finding a most preferred policy is equivalent to computing a Pareto-optimal policy in a multi-objective MDP that is constructed from the original MDP, the preference automaton, and the chosen stochastic ordering relation. Throughout the paper, we employ running examples to illustrate the proposed preference specification and solution approaches. We demonstrate the efficacy of our algorithm using these examples, providing detailed analysis, and then discuss several potential future directions.	
	\end{abstract}
\begin{document}
 
 \maketitle

\section{Introduction}
	\label{sec:intr}
	With the rise of artificial intelligence and foundational models, 
robotics and other autonomous systems are now designed to understand and respond to human commands in natural language, making human-robot interactions more intuitive and user-friendly. 
However, human commands or preferences are not always expressible by a complete linear order. Preferences may need to admit a \emph{partial} order because of (a) \emph{Inescapability}: An agent has to make decisions under time limits but with partial information about preferences because, for example, it lost communication with the server; and (b) \emph{Incommensurability}: Some situations, for instance, the quality of an apple to that of banana, are fundamentally incomparable since they lack a standard basis for comparison. These situations motivate the need for 
a procedure that translates human preferences into a computational model for autonomous agents and a planner that deals with partially ordered preferences in the presence of all uncertainties in its environment.

In this paper, we  consider preference-based planning (PBP) in stochastic systems modeled as Markov decision processes (MDPs) with user preferences over temporally extended goals. Specifically, we express each temporally extended goal using a formula in Linear Temporal Logic on Finite Traces (LTL$_f$).  
\begin{figure}[h]
		\centering
		\includegraphics[width=1.0\linewidth]{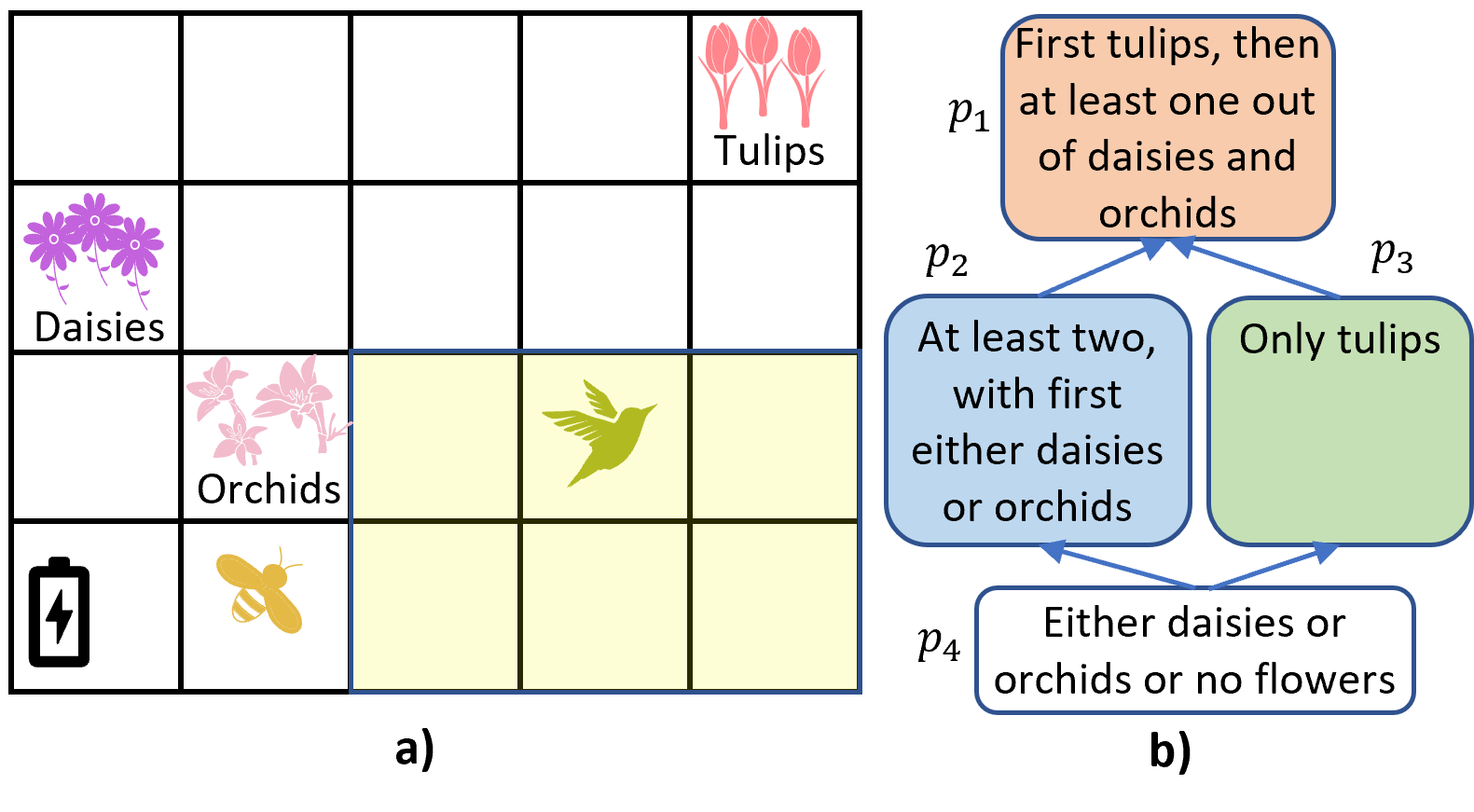}
		\caption{
		 \textbf{a)} Bob's Garden.
		 \textbf{b)} Bob's preferences on how the bee robot should perform the task of pollinating the flowers.
		}
		\label{fig:gap_garden}
	\end{figure}
	For motivation, consider Figure~\ref{fig:gap_garden}, which shows a garden that belongs to Bob. He grows three kinds of flowers: Tulips, daisies, and orchids. To pollinate the flowers, he uses a bee robot with limited battery. The environment is uncertain due to the presence of another agent (bird), the weather, and the robot dynamics.

    Bob has a preference for how the robot should achieve the task of pollination. Compared to the other types, tulips have a shorter life span, so Bob considers four outcomes: 
    \begin{itemize}
        \item[($p_1$)] pollinate tulips first, then at least one other flower type; 
        \item[($p_2$)] pollinate two types of flowers, with the first being either daisies or orchids; 
        \item[($p_3$)] pollinate only tulips; and
        \item[($p_4$)] pollinate either daisies or orchids or no flowers,
    \end{itemize}
    over which, his preference is shown in Figure~\ref{fig:gap_garden}b, using a preference graph, where the nodes represent the outcomes, and each directed edge is an improving flip \citep{santhanamRepresentingReasoningQualitative2016}. According to this graph, $p_1$ is the most preferred outcome, and $p_4$ is the least preferred, while $p_2$ and $p_3$ are incomparable with each other. 
    %
    %
    As the robot has a limited battery life and the system is stochastic, it might not achieve the most preferred outcome with probability one. 
    Thus, we are interested to answer the following questions: Given all the uncertainties in the environment, how to compute a robot's policy that maximally satisfies Bob's preference? How to rank different policies, considering the fact that some outcomes are incomparable?

	Preference-based planning (PBP) enables the system to decide which goals to satisfy when not all of them can be achieved \citep{hastie2010rational}. 
	Even though PBP has been studied since the early 1950's, most works on preference-based temporal planning (c.f. \cite{baier2008planning}) assume that all outcomes are pairwise comparable---that is, the preference relation is a \emph{total} order. 
	This assumption is often strong and, in many cases, unrealistic \citep{aumann1962utility}. 

    With the emergence of large language models translating human commands into temporal logic formulas \citep{chen2023nl2tl,coslerNl2specInteractivelyTranslating2023a}, 
 it becomes natural to consider developing PBP algorithms with human preferences over temporal goals, which are commonly encountered in robotic planning problems. Setting aside natural language understanding, 
  PBP has been well-studied for deterministic systems given both total and partial preferences. See the survey by \cite{baier2008planning}.
  For preferences over temporal goals in deterministic systems, several works \citep{tumova2013least,wongpiromsarn2021,rahmani2020what,rahmani2019optimal} proposed minimum-violation planning methods that decide which low-priority constraints should be violated. 
    \cite{amorese2023optimal} formulate and solve a two-objective optimal planning problem where one objective is to minimize the total cost of a plan, while the other aims to optimize the costs of  individual temporal goals ordered by the user preference.
   
   \cite{mehdipourSpecifyingUserPreferences2021} associate weights with Boolean and temporal operators in signal temporal logic to specify the importance of satisfying the sub-formulas and priority in the timing of satisfaction. This reduces the PBP problem to that of maximizing the weighted satisfaction in deterministic dynamical systems. 
   %
   For planning under this new specification language, \cite{cardona2023mixed} propose an algorithm based on mixed linear integer programming.
    However, the solutions to PBP  for deterministic systems cannot be applied to stochastic systems (such as  MDPs/POMDPs).
    This is because in stochastic systems, even a deterministic policy yields a distribution over outcomes. Hence, to determine a better policy, we need comparison of distributions---a task a deterministic planner cannot do.
    
	Preference-based planning for stochastic systems has been less studied until recently.
	\cite{Lahijanian2016} consider a problem that aims to revise a given specification to improve the probability of satisfaction of the specification. They
	develop an \ac{mdp} planning algorithm that trades off minimizing the cost of revision and maximizing the probability of satisfying the revised formula. 
    \cite{cai2021optimal} focus on planning with infeasible LTL specifications in MDPs. 
    Their problem's objective is to synthesize a policy that, in decreasing order of importance, 1) provides a desired guarantee to satisfy the task, 2) satisfies the specifications as much as possible, and 3) minimizes the implementation cost of the plan.
	\cite{li2020probabilistic} solve a preference-based probabilistic planning problem by reducing it to a multi-objective model checking problem. 
 \cite{liProbabilisticPlanningPrioritized2023} study a class of preferences over temporal goals constructed using prioritized conjunction and ordered disjunction and show that these formulas can be equivalently expressed by weighted automata. They then provide a probabilistic planning algorithm that maximizes the  expected degree of satisfaction. 
	However, all these works assume the preference relation to be \emph{total}. 
	To the best of our knowledge, only \cite{fu2021probabilistic} and \cite{kulkarni2022opportunistic} have studied probabilistic planning with incomplete preferences. 
	\cite{kulkarni2022opportunistic} focus on the qualitative version of the problem, synthesizing strategies that identify and exploit opportunities to improve the most preferred achievable outcome with either positive probability or probability one. This is achieved by reducing the problem to reactive synthesis \citep{manna2012temporal,baier2008principles}. In \cite{fu2021probabilistic}, the author introduced the notion of the value of preference satisfaction for planning within a pre-defined finite time duration and developed a mixed-integer linear program to maximize the satisfaction value for a subset of preference relations. In comparison, our work resorts to the notion of stochastic ordering to rank the policies in the stochastic system with respect to the partial order of temporal goals and allows the time horizon to be finite, but unbounded. 

	Our contributions in this paper are four-fold. (1) We introduce a new computational model called \emph{Preference Deterministic Finite Automaton (PDFA)}, which models a user's (possibly partially-ordered) preference  over temporally extended goals. (2) We introduce an algorithm that translates a set of partially ordered LTL$_f$ formulas, each representing a temporal goal, to a PDFA. (3) We establish a connection between the PBP in stochastic systems and the notions of stochastic orders~\citep{masseyStochasticOrderingsMarkov1987}. This connection allows us to rank policies given their induced probabilistic distribution over possible outcomes. 
    Hence, it 
 reduces probabilistic planning with partially-ordered preferences over temporal goals to computing the set of nondominated policies for a multi-objective \ac{mdp}, constructed as a product of the \ac{mdp} modeling the environment and the PDFA specifying the user preference over the temporal goals.
  (4) We employ the property of weak-stochastic nondominated policies to design multiple objective functions in the product \ac{mdp} and prove that a Pareto-optimal policy in the resulting multi-objective product \ac{mdp} is weak-stochastic nondominated respecting the preference relation. Thus,  
	the set of weak-stochastic nondominated policies can, then, be computed using any off-the-shelf solver that computes Pareto optimal policies.

    The paper is organized as follows.
    In Section~\ref{sec:def}, we present preliminaries and our problem definition.
    In Section~\ref{sec:results}, we introduce \ac{pdfa}, and in Section~\ref{sec:ltlf_2_pdfa}, we present our algorithm for converting a preference model of a set of LTL$_f$ formulas into a \ac{pdfa}.
    In Section~\ref{sec:alg}, we present our algorithm for computing a nondominated a policy, given the \ac{pdfa} specifying the user's preference over temporal goals.
    In Section~\ref{sec:caseStudy}, we present a case study and our detailed analysis.

    We presented a preliminary version of this paper at the 2023 IEEE International Conference 
    on Robotics and Automation~\citep{rahmani2023probabilistic}. 
    In addition to revisions made throughout the paper, we have included several new results: (1) Our preliminary version assumed the \ac{pdfa} is given by the user, but in this version we assume the user's preference is specified using a partially ordered set of LTL$_f$ formulas, and develop an algorithm to translate the partially ordered set of LTL$_f$ formulas into a \ac{pdfa}; (2) the preliminary version considered only the notion of weak-stohastic ordering for comparing policies, but in this version we added two additional notions of stochastic ordering, strong-stochastic ordering and weak$^\ast$-stochastic ordering; and (3) we extended our experiment to include results for the new additional stochastic orderings and discuss how different stochastic orders may affect the policy choices.   

    \section{Definitions}
    \label{sec:def}
	\textbf{Notations}: The set of all finite words over a finite alphabet $\Sigma$ is denoted $\Sigma^\ast$. 
    The empty string, $\Sigma^0$, is denoted as $\epsilon$. 
    %
    We denote the set of all probability distributions over a finite set $X$ by $\dist(X)$.
    Given a distribution $\mathbf{d}\in \dist(X)$, the probability of an outcome $x\in X$ is denoted $\mathbf{d}(x)$. %

	\subsection{The System and its Policy}

    We model the system using a variant of \ac{mdp}.

	\begin{definition}[\ac{tlmdp}]
		\label{def:labeled_mdp}
		A \ac{tlmdp}, or a terminating MDP for short, is a tuple  $M = \langle S, A:=\bigcup_{s \in S} A_s, \mathbf{P}, s_0, s_\bot, \calAP, L \rangle$ in which 
		$S$ is a finite set of states;
		$A$ is a finite set of actions, where for each state $s \in S$, $A_s$ is the set of available actions at $s$;
		$\mathbf{P}: S \times A   \rightarrow \dist(S)$ is the probabilistic transition  function, where for each $s, s' \in S$ and $a \in A$, $\mathbf{P}(s, a, s')$ is the probability that the MDP transitions to $s'$ after taking action $a$ at $s$; 
		$s_0 \in S$ is the initial state;
		$s_\bot \in S$ is the \emph{termination state}, which is a unique \emph{sink} state and $A_{s_\bot} = \emptyset$; 
		$\calAP$ is a finite set of atomic propositions; and
		$L: S \rightarrow 2^{\calAP} \cup \{\epsilon\}$ is a labeling function that assigns to each state $s\in S \setminus \{s_\bot\}$, the set of atomic propositions $L(s) \subseteq \calAP $ that hold in $s$. 
		Only the terminating state is labeled the empty string, i.e., $L(s)=\epsilon$ if and only if $s=s_\bot$.
	\end{definition}

	Though this definition assumes a single sink state, we do not lose generality, as one can always convert an MDP with multiple sink state into an equivalent MDP with a single sink state by  keeping only a single sink state and redirecting all the transitions to other sink states to that sink state. 

	The robot's interaction with the environment in a finite number $k$  of steps produces an \emph{execution} $\varrho = s_0 a_0 s_1 a_1 \cdots s_{k-1} a_{k-1} s_{k}$, where $s_0$ is the initial state and at each step $0 \leq i \leq k$, the system is at state $s_i$, the robot performs $a_i \in A_{s_i}$, and then the system transitions to state $s_{i+1}$, picked randomly based on  the distribution $\mathbf{P}(\cdot \mid s_i, a_i) $.
	%
	This execution produces a \emph{path} defined as $\rho=s_0 s_1 \cdots s_k \in S^*$, 
	and the \emph{trace} of this path is defined as the finite word $\trace(\rho)=L(s_0) L(s_1) L(s_2) \cdots L(s_k)  \in (2^\calAP)^*$.
	%
	%
	Path $\rho$ is called \emph{terminating} if $s_k = s_\bot$.
	The set of all terminating paths in $M$ is denoted $\Paths_{\bot}(M)$.
  A policy for $M$ is a function $\pi: \mathscr{D} \rightarrow \mathscr{C}$ with $\mathscr{D} \in \{S, S^*\}$ and $\mathscr{C} \in \{A, \dist(A)\}$, and it is called \emph{memoryless} if $\mathscr{D}=S$; \emph{finite-memory} if $\mathscr{D}=S^*$; \emph{deterministic} if $\mathscr{C}=A$, and \emph{randomized} if $\mathscr{C}=\dist(A)$.

In a terminating \ac{mdp}, a policy is \emph{proper} if it guarantees that the termination state $s_\bot$ will be reached with probability one \citep{bertsekas1991analysis}. 	
The set of all randomized, finite-memory, proper polices for $M$ is denoted $\Pi_{prop}^M$. 
	%
	%
	In this paper, we consider only the \ac{tlmdp}s for which all the policies are proper. In other words, the system always terminates after a finite time. This restriction is due to that (1) 
many applications require the robot to finish its execution in a finite time, and 2) the preference specification, defined next, is restricted to a partially ordered set of finite traces. 

 \subsection{Specifying the Temporal Goals}
The temporal goals in the MDP are specified formally using the following language.
 
\begin{definition}[Syntax of \ac{ltlf}  \citep{de2013linear}]
Given a finite set $\calAP$ of atomic propositions, a formula in \ac{ltlf} over $\calAP$ is generated by the following grammar:
\[    
\varphi \coloneqq   p \mid \neg \varphi \mid \varphi  \land \varphi  \mid \Next \varphi \mid \varphi  \until \varphi,
\]
where $p \in \calAP$ is an atomic proposition, $\neg$ and $\land$ are the standard Boolean operators negation and conjunction, respectively, and $\Next$ and $\until$ are temporal operators ``Next'' and ``Until'', respectively. 

\end{definition}
The temporal operators are interpreted over sequences of time instants. 
The formula $\Next \varphi$ means at the next time instant, $\varphi$ holds true. Formula $\varphi_1 \until \varphi_2$ holds true at the current time instant if there exists a future time instant at which $\varphi_2$ holds true and for all time instants from the current time until that future time, $\varphi_1$ holds true.
%
An additional temporal operator ``Eventually'' ($\Eventually$) is defined as $\Eventually \varphi := \truev \until \varphi$. 
%
Formula $\Eventually \varphi$ means there exists a future time instant at which $\varphi$ holds true.
The dual of Eventually operator is the ``Always'' ($\Always$). It is defined as $\Always \varphi := \neg \Eventually \neg \varphi$. 
Formula $\Always \varphi$ means $\varphi$ holds true at the current instant and all future instants. 
For formal semantics of \ac{ltlf}, see~\cite{de2013linear}.

\begin{example} \label{ex:ltlf-goals}
  For the example in Figure~\ref{fig:gap_garden}, we set $\calAP = \{o, d, t \}$, in which
 $o$ means orchids are being pollinated, $d$ means daisies are being pollinated, and $t$ means tulips are being pollinated. 
    The temporal goals $p_1$ through $p_4$ in the example in that figure are expressed using the following \ac{ltlf} formulas:
    %
    \begin{itemize}
        \item[($p_1$)] pollinate tulips first, then at least one out of daisies and orchids; 
        \begin{align*}
            (\neg d \wedge \neg o) \until (t \wedge \Next \Eventually ( d \vee o)),
        \end{align*}
        \item[($p_2$)] pollinate two types of flowers, with the first being either daisies or orchids; 
        \begin{align*}
            \neg t \until ((o \wedge \Next \Eventually (d \vee t)) \vee (d \wedge \Next \Eventually ( o \vee  t)) ),
        \end{align*}
        \item[($p_3$)] pollinate only tulips;
        \begin{align*}
            (\neg d \wedge \neg o) \until (t \wedge \Always (\neg d \wedge \neg o)), \text{and}
        \end{align*}
        \item[($p_4$)] pollinate either daisies or orchids or no flowers;
        \begin{align*}
            \Always(\neg d \land \neg o \land \neg t) &\lor (\Eventually o \land \Always(\neg d \land \neg t)) \\ &\lor (\Eventually d \land \Always(\neg o \land \neg t)).
        \end{align*}
    \end{itemize}
    
\end{example}

Given an \ac{ltlf} formula $\varphi$, the words over the alphabet $2^\calAP$ that satisfy $\varphi$, constitute the language of $\varphi$, which is denoted $\lang(\varphi)$. 
%
In the following context, we assume $\Sigma \coloneqq 2^\calAP$.
%


The language of \ac{ltlf} formula $\varphi$ can be represented by the set of words accepted by a finite automaton defined as follows:
\begin{definition}[\ac{dfa}]
 A \ac{dfa} is a tuple $\calA = \langle Q, \Sigma, \delta, q_0, F \rangle$ with a finite set of states $Q$,
    a finite alphabet $\Sigma$, a deterministic transition function $\delta: Q \times \Sigma \rightarrow Q$,
    an initial state $q_0\in Q$, and a set of accepting (final) states $F\subseteq Q$.  For each state $q \in Q$ and letter 
    $\sigma \in \Sigma$,  $\delta(q,\sigma)=q'$ is the state reached upon reading input $\sigma$ from state $q$.
\end{definition}

Slightly abusing the notion, we define the extended transition function $\delta: Q \times \Sigma^* \rightarrow Q$ in the usual manner: 
$\delta(q,\sigma w)=\delta( \delta(q,\sigma),w )$ for a given $\sigma\in \Sigma$ and $w\in \Sigma^\ast$, and $\delta(q, \epsilon) = q$.
A word $w \in \Sigma^\ast$ is \emph{accepted} by the DFA if and only if $\delta(q, w)\in F$. 
The language of $\calA$, denoted $\lang(\calA)$, is set the of all words accepted by the DFA, i.e., $\lang(\calA) = \{w \in \Sigma^* \mid \delta(q, w)\in F \}$.

For each \ac{ltlf} formula $\varphi$, there exists a DFA $\calA_\varphi$ such that $\lang(\varphi) = \lang(\calA_\varphi)$.
Therefore, we can encode each \ac{ltlf} formula using a DFA 
 \citep{de2013linear}.

	\subsection{Rank the Policies}

	We introduce a computational model that captures the user's preference over different temporal goals. 
	\begin{definition}
		\label{def:model_preference}
  A preference model is a tuple $\langle U, \succeq \rangle$ where $U$ is a countable set of outcomes and $\succeq$ is \emph{preorder}--a reflexive and transitive binary relation--on $U$.
	\end{definition}
     Given $u_1, u_2 \in U$, we write $u_1 \succeq u_2$ if $u_1$ is \emph{weakly preferred to} (\ie,  is at least as good as) $u_2$; and $u_1\sim u_2$ if $u_1\succeq u_2$ and $u_2\succeq u_1$,
    that is,  $u_1$ and $u_2$ are \emph{indifferent}.
    We write
    $u_1 \succ u_2$ to mean that $u_1$ is \emph{strictly preferred} to $u_2$, \ie, $u_1\succeq u_2$ and $u_1\not \sim u_2$.  We write
	$u_1 \nparallel u_2$ if $u_1$ and $u_2$ are \emph{incomparable}.

\begin{definition}
Given a preference model $\langle U, \succeq \rangle$,
	let $X$ be  a subset of   $U$, the \emph{upper closure of $X$ }is defined by 
	\[
	X^\uparrow = \{y\mid y \succeq x \text{ for some }x\in X \}, \text{ and }
	\] 
 the \emph{lower closure of $X$ }is defined by
	\[
	X^\downarrow = \{y\mid y \preceq x  \text{ for some }x\in X \}.
	\] 
	
	A set $X$ is called an \emph{increasing set} if $X = X^\uparrow$.
\end{definition}	


\cite{masseyStochasticOrderingsMarkov1987} introduced three different stochastic orderings, called, \emph{strong}, \emph{weak}, and \emph{weak$*$} orderings. The three stochastic orderings differ in how they determine a \emph{family of subsets} of $U$. 

	\begin{definition}\citep{masseyStochasticOrderingsMarkov1987}
		Let $\mathfrak{E}_{st}(U)$, $\mathfrak{E}_{wk}(U)$, and $\mathfrak{E}_{wk*}(U)$ denote the strong, weak, and weak* orderings, respectively. 
		It is defined that
		\[
		\mathfrak{E}_{st}(U) = \{\text{ all increasing sets in $2^U$} \},
		\]
		\[
		\mathfrak{E}_{wk}(U) = \{\{x\}^\uparrow \mid x\in U \} \cup \{U, \emptyset\}, \text{and}
		\]
		\[
		\mathfrak{E}_{wk*}(U) = \{E\setminus \{x\}^\downarrow \mid x\in U \} \cup \{U, \emptyset\}.
		\]
		\end{definition}
	
	These stochastic orderings allow us to rank probability measures according to the partially ordered set $U$.
	Let $\mathfrak{E}(U)$ be \emph{a family of subsets} of $U$ that includes $U$ itself and the empty set $\emptyset$. That is, $\mathfrak{E} \in \{\mathfrak{E}_{st}, \mathfrak{E}_{wk}, \mathfrak{E}_{wk*}\}$. 	
	 Let $P_1$ and $P_2$ be two probability measures on $U$. We denote $P_2\ge_{\mathfrak{E}}P_1$  
	whenever $P_2(X) \ge_{\mathfrak{E}} P_1(X)$ for all subsets $X\in \mathfrak{E}(U)$.
	It is proven that if the partial order $U$ is a total order, then the three stochastic orderings $\mathfrak{E}_{st}(U)$, $\mathfrak{E}_{wk}(U)$, and $\mathfrak{E}_{wk*}(U)$  are equivalent (see Proposition~2.5 of \cite{masseyStochasticOrderingsMarkov1987}). 
	However, for a partial order, the three stochastic orderings may differ.
		%

        To illustrate, consider the following example.
     \begin{example}
	 	Let $U = \{a,b,c,d\}$ and  $\succeq = \{(a, b), (b, d), (c, d), (a, c), (a, d) \} \cup I_U$, where $I_U$ is the identity relation on $U$ and that $(x,y)\in \succeq$ if and only if $x\succeq y$.
       Also, consider probability measures $P_1$, $P_2$, and $P_3$ where $P_1(a)=0.5, P_1(b)=0.3, P_1(c)=0.2$, $P_2(b) = 0.5, P_2(c) = 0.3, P_2(d) = 0.2$, and $P_3(a) = 0.3, P_3(b) = 0.2, P_3(d) = 0.5$.   

        We have
	 		\[
	 	\mathfrak{E}_{st}(U) = \{ \{a\}, \{a,b\}, \{a,c\}, \{a, b, c \}, \{a,b,c,d\}, \emptyset\}.
	 	\]
        Accordingly, 
	 	\[
	 	[P_1[X]]_{X\in	\mathfrak{E}_{st}(U) }= [0.5, 0.8, 0.7, 1, 1, 0], 
	 	\]
	 	\vspace{-18pt}
	 	\[
	    [P_2[X]]_{X\in	\mathfrak{E}_{st}(U) }= [0, 0.5, 0.3, 1, 1, 0], \text{and}
	 	\] 
	 	\vspace{-18pt}
	 	\[
	 	[P_3[X]]_{X\in 	\mathfrak{E}_{st}(U) }   = [0.3, 0.5, 0.3, 0.5, 1, 0].
	 	\]
   Therefore, $P_1 >_{\mathfrak{E}_{st}} P_2$, $P_1 >_{\mathfrak{E}_{st}} P_3$. None of $P_2$ and $P_3$ strong-stochastic dominates the other one.
   
	 	Also, we have
	 		\[
	 	\mathfrak{E}_{wk}(U) = \{ \{a\}, \{a,b\}, \{a,c\}, \{a,b,c,d\}, \emptyset\}.
	 	\]
	 	Accordingly, 
	 	\[
	 	[P_1[X]]_{X\in	\mathfrak{E}_{wk}(U) }= [0.5, 0.8, 0.7, 1, 0],
	 	\]
	 	\vspace{-18pt}
	 	\[
	    [P_2[X]]_{X\in	\mathfrak{E}_{wk}(U) }= [0, 0.5, 0.3, 1, 0], \text{and}
	 	\] 
	 	\vspace{-18pt}
	 	\[
	 	[P_3[X]]_{X\in 	\mathfrak{E}_{wk}(U) }   = [0.3, 0.5, 0.3, 1, 0].
	 	\]
	 	Thus, $P_1 >_{\mathfrak{E}_{wk}} P_2$, $P_1 >_{\mathfrak{E}_{wk}} P_3$, and $P_3 >_{\mathfrak{E}_{wk}} P_2$.

        Also, we have
	 		\[
	 	\mathfrak{E}_{wk*}(U) = \{ \{a,b\}, \{a,c\}, \{a, b, c \}, \{a,b,c,d\}, \emptyset\}.
	 	\]
	 	Accordingly, 
	 	\[
	 	[P_1[X]]_{X\in	\mathfrak{E}_{wk*}(U) }= [0.8, 0.7, 1, 1, 0],
	 	\]
	 	\vspace{-18pt}
	 	\[
	    [P_2[X]]_{X\in	\mathfrak{E}_{wk*}(U) }= [0.5, 0.3, 0.8, 1, 0], \text{and}
	 	\] 
	 	\vspace{-18pt}
	 	\[
	 	[P_3[X]]_{X\in 	\mathfrak{E}_{wk*}(U) }   = [0.5, 0.3, 0.5, 1, 0].
	 	\]
	 	Thus, $P_1 >_{\mathfrak{E}_{wk*}} P_2$, $P_1 >_{\mathfrak{E}_{wk*}} P_3$, and $P_2 >_{\mathfrak{E}_{wk*}} P_3$.
    \end{example}
 In our context, because we are interested in sequential decision-making and planning problems with a finite time termination, the set $U$ is selected to be $\Sigma^\ast$, or the set of finite traces generated by the system and its labeling function.
	%
 Based on the ranking of probability measures induced by each one of the stochastic orderings for $\Sigma^\ast$, we can rank the proper policies $\Pi^M_{prop}$ in the \ac{tlmdp} as follows.
	 
	Note that a proper policy $\pi: S^\ast \rightarrow \dist(A)$ produces a distribution 
	over the set of all terminating paths in the MDP $M$ such that for each terminating path $\rho \in \Paths_{\bot}(M)$,
	$\Pr^\pi(\rho)$
	is the probability of generating $\rho$ when the robot uses policy $\pi$.
	    Each terminating path $\rho$ is mapped to a single word in $\Sigma^*$, namely $\trace(\rho) = L(s_0) L(s_1) \ldots$, 
		and therefore, $\pi$ yields a distribution 
		over the set of all finite words over $\Sigma$
		such that for each word $w \in \Sigma^\ast$, $\Pr^\pi(w)$ is the probability that 
		$\pi$ produces $w$. Formally,
  \[
  \Pr^\pi(w) = \sum_{\rho \in \Paths_{\bot}(M): L(\rho)=w} \Pr^\pi(\rho).
  \]
    Additionally, for a subset $X \subseteq \Sigma^*$, $\Pr^\pi(X)$ is the probability of the words  generated by $\pi$ to be within $X$. Formally,
    \[
  \Pr^\pi(X) = \sum_{w \in X} \Pr^\pi(w).
  \] 
	
		
	

%
	
  %
%
%
%
\begin{definition}
\label{def:weak_dominating_policies}
        Let $\mathfrak{E} \in \{\mathfrak{E}_{st}, \mathfrak{E}_{wk}, \mathfrak{E}_{wk*}\}$ be a stochastic ordering and $\langle U \coloneqq \Sigma^\ast, \succeq\rangle$ be a preference model.
         Given two proper policies $\pi$ and $\pi'$ for the terminating labeled \ac{mdp} $M$, $\pi$ \emph{$\mathfrak{E}$-stochastic dominates} $\pi'$, denoted $\pi >_{\mathfrak{E}} \pi'$, if for each subset $X \in \mathfrak{E}(U)$, it holds
         that $\Pr^{\pi}(X) \ge \Pr^{\pi'}(X)$, and there exists a subset $Y \in \mathfrak{E}(U)$ such that
         $\Pr^{\pi}(Y) > \Pr^{\pi'}(Y)$.
\end{definition}

%

This definition is used to introduce the following notion.
	\begin{definition}
		\label{def:sto_dominance}
		   A proper policy $\pi \in \Pi_{prop}^M$ is \emph{$\mathfrak{E}$-stochastic nondominated} if there \emph{does not exist} any policy $\pi' \in \Pi_{prop}^M$ such that $\pi' >_{\mathfrak{E}} \pi$.
	\end{definition}
	
Informally, we say a policy $\pi$ is \emph{$\mathfrak{E}$-preferred} if and only if it is $\mathfrak{E}$-stochastic nondominated in $\Pi_{prop}^M$.
    

We aim to solve the following planning problem:
	\begin{problem}
 \label{problem:main-problem}
	    Given a terminating labeled \ac{mdp} $M = \langle S, A:=\bigcup_{s \in S} A_s, \mathbf{P}, s_0, s_\bot, \calAP, L \rangle$, a preference model $\langle \Sigma^\ast, \succeq \rangle$, and a stochastic ordering $\mathfrak{E} \in \{\mathfrak{E}_{st}, \mathfrak{E}_{wk}, \mathfrak{E}_{wk*}\}$, compute a proper policy that is $\mathfrak{E}-$stochastic nondominated.
	\end{problem}
		%

	\section{Modeling Preference over LTL$_f$ Goals}
    \label{sec:results}
	
	In this section, we consider the case when the user defines their preference over temporal goals. 
	


 The user specifies the temporal goals using \ac{ltlf} formulas, one formula for each goal, and then expresses their preference over these goals using a
    preference model over the set of these formulas. 
	
	\begin{definition}
	\label{def:preference-model-ltl}
  An \emph{\ac{ltlf} preference model} is a preference model $\langle \Phi, \weakpref \rangle$ in which $\Phi = \{\varphi_1,\ldots, \varphi_N \}$ is a finite set of  \emph{distinct} \ac{ltlf} formulas over a set of atomic propositions $\calAP$ and $\weakpref$ is a partial order---a reflexive, transitive, and an antisymmetric---relation on $\Phi$.
\end{definition}

Two \ac{ltlf} formulas $\varphi$ and $\varphi'$ are distinct if $\mathcal{L}(\varphi) \ne \mathcal{L}(\varphi')$, where $\mathcal{L}(\varphi)$ is the language of the formula, \ie, the set of words satisfying the formula. 


\begin{assumption}
	\label{assume:cover}
    We assume that $\bigcup_{1 \leq i \leq N} \mathcal{L}(\varphi_i) = \Sigma^\ast$, meaning that for each word $w \in \Sigma^\ast$, there
    is at least one $\varphi \in \Phi$ such that $w \in \mathcal{L}(\varphi)$.
    Note that if $\bigcup_{1 \leq i \leq N} \mathcal{L}(\varphi_i) \subset \Sigma^\ast$, then the assumption will hold by adding the formula $\varphi = \bigwedge_{1 \leq i \leq N} \neg \varphi_i$ to $\Phi$.
\end{assumption}

\begin{assumption}
	\label{assume:partial-order}
	The preference model $\langle \Phi, \weakpref \rangle$ is a partial order relation over $\Phi$, which means the following properties are satisfied:
	\begin{itemize}
		\item Reflexive: $\varphi \weakpref \varphi$ for all $\varphi\in \Phi$.
		\item Antisymmetric: $\varphi \weakpref \varphi'$ and $\varphi'\weakpref \varphi$ implies $\varphi=\varphi'$.
		\item Transitive: $\varphi_1\weakpref  \varphi_2$ and $\varphi_2 \weakpref \varphi_3$ implies $\varphi_1\weakpref \varphi_3$.
	\end{itemize}
\end{assumption}	

\begin{remark}
Note that any preference model $\langle \Phi, \weakpref \rangle$ in which $\weakpref$ is a preorder, \ie, a partial order without the requirement of being antisymmetric, can be converted into a preference model $\langle \Phi', \weakpref' \rangle$ in which $\weakpref'$ is a partial order. 
%
The idea is to iteratively refine the preference model until the resulting model has no pair of indifferent formulas.
In each step, two indifferent formulas $\varphi_i$ and $\varphi_j$ are replaced by their disjunction $\varphi_i \lor \varphi_j$, after which the preference relation is altered accordingly.  
\end{remark}

%
%

	The model $\langle \Phi, \weakpref \rangle$ is a combinative preference model, as opposed to an exclusionary one. This is because we do not assert the exclusivity condition $ \varphi_i\land \varphi_j = \falsev$. This allows us to represent a preference such as ``Visiting A and B is preferred to visiting A,'' ($\Eventually A\land \Eventually B\weakpref \Eventually A$) where the less preferred outcome must be satisfied first in order to satisfy the more preferred outcome. In literature, it is common to study exclusionary preference models (see \cite{baier2008planning,bienvenu2011specifying} and the references within) because of their simplicity \cite{stanford2022preferences}.  
 %
 However, we focus on planning with combinative preferences since they are more expressive than the exclusionary ones \citep{hansson2001structure}.  In fact, every exclusionary preference model can be transformed into a combinative one, but the opposite is not true. 
	
	%

	When a combinative preference model is interpreted over finite words, the agent needs a way to compare the sets of temporal logic objectives satisfied by two words. 
	For instance, consider the preference that ``Visiting A and B is preferred to visiting A,'' and let $w_1=\emptyset \{A\}\emptyset \{B\}$ and $w_2 = \emptyset \{A\} \emptyset$ be two finite
	 words. 
  Note that $w_1$ has both $A$ and $B$ evaluated true, each at some point in time, and $w_2 = \emptyset \{A\} \emptyset $ only has $A$ evaluated true. 
  Therefore, $w_1\models \varphi_1 \land \varphi_2$, whereas $w_2$ satisfies only $\varphi_2$. To determine the preference between $w_1$ and $w_2$, the agent compares the set $\{\varphi_1,\varphi_2\}$ with $\{\varphi_2\}$ to conclude that $w_1$ is preferred over $w_2$. However, suppose the given preference is that ``visiting A is preferred over visiting B,'' i.e., ($\Eventually A\weakpref \Eventually B$). Then the two words $w_1$ and $w_2$ are indifferent since both satisfy the more preferred objective $\Eventually A$.  To formalize this notion, we define the notion of most-preferred outcomes.

	Given a non-empty subset $\mathbb{X} \subseteq \Phi$, let $\mathsf{MP}(\mathbb{X}) \triangleq \{\varphi \in \mathbb{X} \mid \nexists \varphi' \in \mathbb{X}: \varphi' \weakpref \varphi\}$ denote the set of most-preferred outcomes in $\mathbb{X}$.
	
	\begin{definition}
	\label{def:mp-outcomes}
	Given an \ac{ltlf} preference model $\langle \Phi, \weakpref \rangle$ and a finite word $w  \in \Sigma^\ast$, the set of most-preferred formulas satisfied by $w$ is given by $\mathsf{MP}(w) \coloneqq \mathsf{MP}(\{ \varphi \in \Phi \mid  w\models \varphi\})$. 
	\end{definition}

	By definition, there is no outcome included in $\mathsf{MP}(w)$ that is weakly preferred to any other outcome in $\mathsf{MP}(w)$. Thus, we have the following result.
	
	\begin{lemma}
 \label{lma:incomparable-in-MP}
	For any word $w \in \Sigma^\ast$, formulas in $\mathsf{MP}(w)$ are incomparable to each other. 
	\end{lemma}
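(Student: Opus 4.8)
The plan is to argue directly from the definition of $\mathsf{MP}$, reading it as the set of maximal elements of the satisfied-formula set under the weak-preference partial order. Concretely, I would first fix $w \in \Sigma^\ast$ and abbreviate $\mathbb{X} = \{\varphi \in \Phi \mid w \models \varphi\}$, so that by Definition~\ref{def:mp-outcomes} we have $\mathsf{MP}(w) = \mathsf{MP}(\mathbb{X}) = \{\varphi \in \mathbb{X} \mid \nexists\, \varphi' \in \mathbb{X},\ \varphi' \neq \varphi,\ \varphi' \weakpref \varphi\}$; that is, $\mathsf{MP}(w)$ collects precisely the $\weakpref$-maximal elements of $\mathbb{X}$. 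The goal then reduces to the purely order-theoretic statement that distinct maximal elements of a poset are pairwise incomparable.

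The core step is a short contradiction argument. I would take two distinct formulas $\varphi, \psi \in \mathsf{MP}(w)$ and suppose, toward a contradiction, that they are comparable; without loss of generality assume $\varphi \weakpref \psi$ (the symmetric case $\psi \weakpref \varphi$ is identical after swapping names). Here the antisymmetry granted by Assumption~\ref{assume:partial-order} does the real work: since $\varphi \weakpref \psi$ and $\varphi \neq \psi$, we cannot also have $\psi \weakpref \varphi$, so in fact $\varphi \strictpref \psi$. But $\varphi \in \mathbb{X}$ is then a distinct element of $\mathbb{X}$ with $\varphi \weakpref \psi$, which directly violates the defining maximality condition for $\psi \in \mathsf{MP}(\mathbb{X})$. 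This contradiction forces $\varphi \nparallel \psi$, completing the argument for an arbitrary pair.

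I do not anticipate a genuine obstacle here: the statement is essentially immediate once the definition is unfolded correctly, and the only point requiring care is the handling of reflexivity. Because $\weakpref$ is reflexive, the quantifier ``$\nexists\, \varphi' \in \mathbb{X}: \varphi' \weakpref \varphi$'' must be understood as ranging over $\varphi' \neq \varphi$ (equivalently, as ``$\nexists\, \varphi' \in \mathbb{X}: \varphi' \strictpref \varphi$''), since otherwise $\mathsf{MP}(\mathbb{X})$ would be empty. With that reading fixed, antisymmetry is exactly the ingredient that upgrades weak comparability between two \emph{distinct} satisfied formulas into strict domination, which is what contradicts maximality. Hence the proof is a one-line application of antisymmetry to the maximality condition, and no case analysis beyond the trivial symmetric swap is needed.
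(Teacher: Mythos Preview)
Your argument is correct and follows essentially the same route as the paper's own proof: both proceed by contradiction, assume two distinct elements of $\mathsf{MP}(w)$ are comparable, use antisymmetry of the partial order $\weakpref$ to obtain a strict preference, and then observe that this contradicts the maximality condition defining $\mathsf{MP}$. Your additional remark that the quantifier in the definition of $\mathsf{MP}(\mathbb{X})$ must be read modulo reflexivity (i.e., as $\nexists\,\varphi'\neq\varphi$ or equivalently $\nexists\,\varphi':\varphi'\strictpref\varphi$) is a valid clarification that the paper leaves implicit.
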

\begin{proof}
    By contradiction. Suppose that the set $\mathsf{MP}(w)$ contains two formulas $\varphi_1$ and $\varphi_2$ that are comparable. Then, since $\weakpref$ is a partial order, one of the following cases must be true:  
    \begin{inparaenum}[1)]
        \item $\varphi_1 \strictpref \varphi_2$, or
        \item $\varphi_2 \strictpref \varphi_1$.
    \end{inparaenum} 
    Consider the first case. By definition of $\mathsf{MP}$ operator, only $\varphi_1$ is included in $\mathsf{MP}(w)$. Similarly, in second case, only $\varphi_2$ is included in $\mathsf{MP}(w)$. This is a contradiction. 
\end{proof}

	Now, we formally define the interpretation of $\langle \Phi, \weakpref \rangle$ in terms of the preference relation it induces on $\Sigma^\ast$.

	\begin{definition}
		\label{def:preference-ltl}
	An \ac{ltlf} preference model $\langle \Phi, \weakpref \rangle$ induces the preference model $\langle \Sigma^\ast, \succeq \rangle$ where for any $w_1, w_2 \in \Sigma^\ast$, 
	
	\begin{itemize}
	\item $w_1 \succeq w_2$ if and only if  for every formula $\varphi \in \mathsf{MP}(w_1)$, there exists a formula  $\varphi'\in \mathsf{MP}(w_2)$ such that $\varphi \weakpref \varphi'$,  
	
	\item $w_1 \sim w_2$ if and only if $\mathsf{MP}(w_1) = \mathsf{MP}(w_2)$, and
	
	\item $w_1 \nparallel w_2$, otherwise.
	\end{itemize}
	\end{definition}



The following set of properties can be shown.

\begin{lemma}
	Letting $\langle \Sigma^\ast, \succeq \rangle$ be the preference model induced by $\langle \Phi, \weakpref \rangle$, for any $w_1, w_2\in \Sigma^\ast$, if $w_1\succeq w_2$, then   
	there does not exist a pair of outcomes $\varphi \in \mathsf{MP}(w_1)$ and $\varphi' \in \mathsf{MP}(w_2)$ such that $\varphi' \weakpref \varphi$.
\end{lemma}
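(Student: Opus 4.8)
The plan is to argue by contradiction, using only the two tools already available: transitivity (and antisymmetry) of the partial order $\weakpref$, together with the previously established Lemma~\ref{lma:incomparable-in-MP}, which states that the formulas in $\mathsf{MP}(w)$ are pairwise incomparable for any word $w$. No structural facts about the words $w_1, w_2$ beyond their most-preferred sets $\mathsf{MP}(w_1)$ and $\mathsf{MP}(w_2)$ will be needed.

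First I would assume $w_1 \succeq w_2$ and suppose, toward a contradiction, that there is a pair $\varphi \in \mathsf{MP}(w_1)$ and $\varphi' \in \mathsf{MP}(w_2)$ with $\varphi' \weakpref \varphi$. The crucial move is to feed this particular $\varphi$ into the definition of the induced relation: since $w_1 \succeq w_2$ and $\varphi \in \mathsf{MP}(w_1)$, Definition~\ref{def:preference-ltl} supplies a witness $\psi \in \mathsf{MP}(w_2)$ with $\varphi \weakpref \psi$. I now have two elements $\varphi'$ and $\psi$ that both lie in $\mathsf{MP}(w_2)$.

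The second step chains the relations. From $\varphi' \weakpref \varphi$ and $\varphi \weakpref \psi$, transitivity of $\weakpref$ gives $\varphi' \weakpref \psi$, making $\varphi'$ and $\psi$ comparable; this directly contradicts Lemma~\ref{lma:incomparable-in-MP} unless $\varphi' = \psi$. To close that loophole I would push the equality back through the chain: if $\varphi' = \psi$, then $\varphi \weakpref \varphi'$ (from $\varphi \weakpref \psi$) combined with the assumed $\varphi' \weakpref \varphi$ forces, by antisymmetry, $\varphi = \varphi'$, collapsing the whole chain onto a single formula.

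The delicate point — and where I expect the only genuine subtlety to lie — is precisely this collapse to equality, which reflects the fact that the reflexive pair $\varphi = \varphi'$ does not witness one outcome dominating a \emph{distinct} outcome. I would therefore phrase the contradiction in terms of strict preference: reading the hypothesized domination as $\varphi' \strictpref \varphi$, the chain $\varphi' \strictpref \varphi \weakpref \psi$ yields $\varphi' \strictpref \psi$ (strict composed with weak is strict, as already used in the proof of Lemma~\ref{lma:incomparable-in-MP}), so $\varphi'$ and $\psi$ are distinct and comparable, contradicting the incomparability of elements of $\mathsf{MP}(w_2)$ with no degenerate case surviving. I would then record that the weak/reflexive instance $\varphi = \varphi'$ is vacuous, so the conclusion holds as stated.
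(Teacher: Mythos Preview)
Your approach is essentially identical to the paper's: assume a pair $\varphi \in \mathsf{MP}(w_1)$, $\varphi' \in \mathsf{MP}(w_2)$ with $\varphi' \weakpref \varphi$, apply Definition~\ref{def:preference-ltl} to $\varphi$ to obtain a witness $\psi \in \mathsf{MP}(w_2)$ with $\varphi \weakpref \psi$, chain by transitivity to get $\varphi' \weakpref \psi$, and contradict Lemma~\ref{lma:incomparable-in-MP}. The paper's proof stops there without addressing the degenerate case $\varphi' = \psi$ that you carefully close via antisymmetry and a strict reading of the hypothesis; in that respect your proposal is more complete than the paper's own argument.
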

\begin{proof}
By contradiction.
	Let $ \mathsf{MP}(w_1) = \{\varphi_1,\ldots, \varphi_m\}  $ and $\mathsf{MP}(w_2) = \{\psi_1,\ldots, \psi_t\}$. 
 Suppose there exists $\psi \in \mathsf{MP}(w_2)$ such that $\psi \weakpref \varphi$ for some $\varphi \in \mathsf{MP}(w_1)$. 
 Given the assumption $w_1 \succeq w_2$, by Definition~\ref{def:preference-ltl}, there exists $\psi' \in \mathsf{MP}(w_2)$ such that $\varphi \weakpref \psi'$. 
 As a result, $\psi \weakpref \varphi \weakpref \psi'$, implying that $\psi \weakpref \psi'$. This contradicts the result in Lemma~\ref{lma:incomparable-in-MP} which imposes $\mathsf{MP}(w_2)$ to contain only incomparable formulas. 
 Thus, the assumption that $\psi \weakpref \varphi$ is contradicted.
\end{proof} 

\begin{lemma}
If $w_1 \sim w_2$, then $\mathsf{MP}(w_1)= \mathsf{MP}(w_2)$.
\end{lemma}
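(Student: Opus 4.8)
The plan is to read $\sim$ as the indifference relation of the induced preorder, that is, to take $w_1 \sim w_2$ to mean $w_1 \succeq w_2$ and $w_2 \succeq w_1$, and then to show that this symmetric weak preference forces the two most-preferred sets to coincide. This is precisely the consistency check that reconciles the standard preorder notion of indifference with the explicit characterization of $\sim$ recorded in Definition~\ref{def:preference-ltl}. I would establish $\mathsf{MP}(w_1) = \mathsf{MP}(w_2)$ by double inclusion; since the two hypotheses $w_1 \succeq w_2$ and $w_2 \succeq w_1$ are symmetric in $w_1$ and $w_2$, proving one inclusion suffices, and the other follows by swapping roles.

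For the inclusion $\mathsf{MP}(w_1) \subseteq \mathsf{MP}(w_2)$, I would fix an arbitrary $\varphi \in \mathsf{MP}(w_1)$ and chase it through both weak-preference hypotheses. Applying $w_1 \succeq w_2$ through Definition~\ref{def:preference-ltl} yields some $\varphi' \in \mathsf{MP}(w_2)$ with $\varphi \weakpref \varphi'$; applying $w_2 \succeq w_1$ to this $\varphi'$ yields some $\varphi'' \in \mathsf{MP}(w_1)$ with $\varphi' \weakpref \varphi''$. Transitivity of $\weakpref$ then gives $\varphi \weakpref \varphi''$, where both $\varphi$ and $\varphi''$ lie in $\mathsf{MP}(w_1)$.

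The crux of the argument, and the step I expect to require the most care, is collapsing this chain. Since $\varphi$ and $\varphi''$ both belong to $\mathsf{MP}(w_1)$, Lemma~\ref{lma:incomparable-in-MP} tells us they are incomparable unless they coincide; as $\varphi \weakpref \varphi''$ exhibits an actual comparison, we are forced to conclude $\varphi = \varphi''$. Substituting back, the chain becomes $\varphi \weakpref \varphi' \weakpref \varphi$, so antisymmetry of the partial order $\weakpref$ gives $\varphi = \varphi'$, whence $\varphi \in \mathsf{MP}(w_2)$. This establishes $\mathsf{MP}(w_1) \subseteq \mathsf{MP}(w_2)$; interchanging the roles of $w_1$ and $w_2$ yields the reverse inclusion, and together they deliver $\mathsf{MP}(w_1) = \mathsf{MP}(w_2)$. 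The only genuine subtlety is making sure both defining properties of the partial order are invoked in the right place: Lemma~\ref{lma:incomparable-in-MP} (which itself rests on transitivity) to equate the two endpoints in $\mathsf{MP}(w_1)$, and antisymmetry to finally identify $\varphi$ with its witness $\varphi'$ in $\mathsf{MP}(w_2)$.
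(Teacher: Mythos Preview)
Your proposal is correct and follows essentially the same approach as the paper: both arguments chase an element $\varphi \in \mathsf{MP}(w_1)$ through the two hypotheses $w_1 \succeq w_2$ and $w_2 \succeq w_1$ to produce a chain $\varphi \weakpref \varphi' \weakpref \varphi''$ with $\varphi'' \in \mathsf{MP}(w_1)$, and then invoke Lemma~\ref{lma:incomparable-in-MP} to collapse it. The paper phrases this as a proof by contradiction (assuming $\varphi \in \mathsf{MP}(w_1)\setminus\mathsf{MP}(w_2)$), while you give a direct double-inclusion argument; in fact your version is slightly more careful, since you explicitly separate the use of Lemma~\ref{lma:incomparable-in-MP} (forcing $\varphi = \varphi''$) from the use of antisymmetry (forcing $\varphi = \varphi'$), a step the paper leaves implicit.
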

\begin{proof}
By way of contradiction, suppose $\mathsf{MP}(w_1)\ne \mathsf{MP}(w_2)$. 
Without loss of generality, let $\varphi \in \mathsf{MP}(w_1) \setminus \mathsf{MP}(w_2)$. 
Given $w_1\succeq w_2$ and $\varphi \in \mathsf{MP}(w_1)$, there must exist a formula $\psi\in \mathsf{MP}(w_2)$ such that $\varphi \weakpref \psi$. 
Also, because $w_2\succeq w_1$, there exists a formula $\phi \in \mathsf{MP}(w_1)$ such that $\psi \weakpref \phi$. 
Due to the transitivity of $\weakpref$, $\varphi \weakpref \psi \weakpref \phi$ and thus $\varphi \weakpref \phi$ and that $\varphi, \phi \in \mathsf{MP}(w_1)$, contradicting Lemma~\ref{lma:incomparable-in-MP}. 
Since $\varphi$ is chosen arbitrarily, witnessing this contradiction implies that $\mathsf{MP}(w_1)= \mathsf{MP}(w_2)$.
\end{proof}

 
	\begin{lemma}
		The preference model $\langle \Sigma^\ast, \succeq \rangle$  induced by $\langle \Phi, \weakpref \rangle$ is a   preorder. 
	\end{lemma}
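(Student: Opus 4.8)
The plan is to verify directly that the induced relation $\succeq$ on $\Sigma^\ast$ satisfies the two defining properties of a preorder, reflexivity and transitivity, by reducing each to the corresponding property of the underlying partial order $\weakpref$ on $\Phi$ through the quantifier structure of Definition~\ref{def:preference-ltl}. Before doing so, I would record one preliminary fact that makes the definition well-posed: for every $w \in \Sigma^\ast$, the set $\mathsf{MP}(w)$ is nonempty. Indeed, by Assumption~\ref{assume:cover} the set $\{\varphi \in \Phi \mid w \models \varphi\}$ is nonempty, and since $\weakpref$ is a partial order on the finite set $\Phi$, this nonempty subset has at least one maximal element, so $\mathsf{MP}(w) \neq \emptyset$. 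This guarantees that the existential clause ``there exists $\varphi' \in \mathsf{MP}(w_2)$'' appearing in the definition of $w_1 \succeq w_2$ can actually be witnessed.

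For reflexivity I would take any $w \in \Sigma^\ast$ and any $\varphi \in \mathsf{MP}(w)$; since $\weakpref$ is reflexive we have $\varphi \weakpref \varphi$, so $\varphi$ itself serves as the witness required in $\mathsf{MP}(w)$, giving $w \succeq w$. For transitivity I would assume $w_1 \succeq w_2$ and $w_2 \succeq w_3$ and fix an arbitrary $\varphi \in \mathsf{MP}(w_1)$. From $w_1 \succeq w_2$ there is some $\varphi' \in \mathsf{MP}(w_2)$ with $\varphi \weakpref \varphi'$; applying $w_2 \succeq w_3$ to this particular $\varphi'$ yields some $\varphi'' \in \mathsf{MP}(w_3)$ with $\varphi' \weakpref \varphi''$. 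Transitivity of $\weakpref$ then gives $\varphi \weakpref \varphi''$, and since $\varphi \in \mathsf{MP}(w_1)$ was arbitrary, the defining condition for $w_1 \succeq w_3$ is met.

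The argument is essentially a direct unfolding of the definition, so I do not anticipate a genuine obstacle. The only point that requires care is the chaining of existential witnesses in the transitivity step, ensuring that the element $\varphi'$ produced by the first relation is exactly the one to which the second relation is applied, together with the nonemptiness of $\mathsf{MP}(w_2)$, which is why the covering assumption is invoked at the outset. Finally, I would note that antisymmetry is deliberately not claimed here: two distinct words may be indifferent, i.e.\ have identical $\mathsf{MP}$ sets without being equal, so $\succeq$ is in general only a preorder and not a partial order on $\Sigma^\ast$.
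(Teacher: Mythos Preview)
Your proof is correct and follows essentially the same approach as the paper's: both verify reflexivity and transitivity directly by reducing to the corresponding properties of $\weakpref$ and chaining the existential witnesses in $\mathsf{MP}(\cdot)$. Your additional care in invoking Assumption~\ref{assume:cover} to ensure $\mathsf{MP}(w)\neq\emptyset$ and your closing remark on the failure of antisymmetry are nice touches that the paper either omits or defers to the text following the lemma.
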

\begin{proof}
   For any $w\in \Sigma^\ast$, $w \succeq w$ because for any $\varphi\in \mathsf{MP}(w)$, $\varphi\weakpref \varphi$.
 	Thus, $\langle \Sigma^\ast, \succeq \rangle$ is reflexive. 
    %
For the transitivity, supposing $w_1\weakpref w_2$ and $w_2\weakpref w_3$, we need to show that $w_1\weakpref w_3$. 
Let for each $t \in \{1, 2, 3 \}$, $\mathsf{MP}(w_t)=\{\varphi_{t, i}\mid i=1,\ldots, n_t \}$ be the most prefered formulas satisfied by $w_t$.
%
%
Given that $w_1\succeq w_2$, for any $\varphi_{1,i} \in \mathsf{MP}(w_1)$, there exists $\varphi_{2,j} \in \mathsf{MP}(w_2)$ such that $\varphi_{1,i} \weakpref \varphi_{2,j}$. 
Also, because $w_3\succeq w_3$, for any such $\varphi_{2, j}$, there exists $\varphi_{3, k} \in \mathsf{MP}(w_3)$
such that $\varphi_{2,j}\weakpref \varphi_{3,k}$. 
Using the transitivity property of $\weakpref$, $\varphi_{1, i}\weakpref \varphi_{3, k}$ holds. 
As a result, $w_1\succeq w_3$.
\end{proof}
 Note that the preference relation $\langle \Sigma^\ast, \succeq \rangle$ needs not to be antisymmetric since there might exist two words $w_1\neq w_2$ such that $\mathsf{MP}(w_1) = \mathsf{MP}(w_2)$.
For example, with $\Phi = \{\Eventually a, \Eventually b \}$, consider two words $\emptyset \emptyset \{a\} \{b\}$ and $\emptyset \{a\} \emptyset \{b\}$.  
Since they both satisfy both $\Eventually a$ and $\Eventually b$, $w_1 \succeq w_2$ and $w_2 \succeq w_1$, while $w_1 \neq w_2$, showing an example where $\succeq$ is not antisymmetric.
 

\section{Constructing a Computational Model for an LTL$_f$ Preference Model}
\label{sec:ltlf_2_pdfa}
	
	In this section, we  introduce a novel computational model called a \acf{pdfa}, which encodes the preference model $\langle \Sigma^\ast, \succeq \rangle$ into an automaton. We present a procedure to construct a \ac{pdfa} for a given preference model $\calP = \langle \Phi, \weakpref \rangle$ and prove its correctness with respect to the interpretation in Definition~\ref{def:preference-ltl}.


 	\begin{algorithm}[t]
		\caption{Construction of Preference Graph}
		\label{alg:pref-graph}
		\begin{algorithmic}[1]
		\Function{PrefGraph}{$\langle \Phi, \weakpref \rangle, \langle Q, \Sigma, \delta, \init \rangle$}
			\State Initialize $\prefnodes = \emptyset, \prefedges = \emptyset$, $Z =\emptyset$.
			\ForAll{$(\vec{q}, \vec{q}') \in Q\times Q$} \label{line:for} \label{line:outer_loop}
				\If{$\mathsf{MP}(\vec{q})= \mathsf{MP}(\vec{q}') $}
					\State $Z= Z\cup \{(\vec{q},\vec{q}'), (\vec{q}',\vec{q})\}$
				\Else
                    \State Initialize $D$ as an empty set of sets.
                    \ForAll{$\varphi \in \mathsf{MP}(\vec{q})$}
                        \State Add $\{\varphi' \in \mathsf{MP}(\vec{q}') \mid \varphi \weakpref \varphi'\}$ to $D$.
                    \EndFor
                    \If {$\emptyset \notin D$}
                        \State $Z \gets Z \cup \{(\vec{q}', \vec{q})\}$. \label{line:z}
                    \EndIf
                \EndIf
			\EndFor
			\State $\prefnodes \gets \mathsf{getSCC}(\langle Q, Z \rangle)$ \label{line:SCC}
			\ForAll{$\prefnode, \prefnode' \in \prefnodes$} \label{line:begin_2_loop}
				\If{$\exists \vec{q}\in \prefnode, \vec{q}'\in \prefnode': (\vec{q}, \vec{q}') \in Z$}
					\State $\prefedges\gets \prefedges \cup \{ (\prefnode, \prefnode') \}$ \label{line:end_2_loop}
				\EndIf
			\EndFor  
			\State \Return $G = \langle \prefnodes, \prefedges \rangle$
		\EndFunction 
		\end{algorithmic}
	\end{algorithm}

    	\begin{definition}
		\label{def:pdfa}
		A \ac{pdfa} for an alphabet $\Sigma$ is a tuple $\pdfa= \langle Q, \Sigma, \delta, \init, G:=(\prefnodes, \prefedges) \rangle$
		in which $Q$ is a finite set of states;
		$\Sigma$ is the alphabet;
		$\delta: Q \times \Sigma \rightarrow Q$ is a transition function;
		$\init \in Q$ is the initial state; 
		and $G=(\prefvertices, \prefedges)$ is a \emph{preference graph} in which,
		$\prefvertices = \lbrace W_1, W_2, \cdots, W_m \rbrace$ is a partition of $Q$---i.e., 
		    $W \subseteq Q$ for each $W \in \prefvertices$, $W \cap W' = \emptyset$ for each distinct state subsets
		     $W, W' \in \prefvertices$, and $\bigcup_{W \in \prefvertices} W = Q$; and $\prefedges \subseteq \prefvertices \times \prefvertices$ is a set of directed edges.
		\end{definition}  

 

 With a slight abuse of notation, we define the extended transition function $\delta: Q \times \Sigma^\ast \rightarrow Q$ in the usual way, \ie, $\delta(q,\sigma w) = \delta(\delta(q,\sigma), w)$ for $w\in \Sigma^\ast$ and $\sigma \in \Sigma$, and $\delta(q, \epsilon)=q$.
		%
		Note that Definition~\ref{def:pdfa} augments 
        a \ac{dfa}
  with the preference graph $G$, instead of a set of accepting (final) states. 

		For two vertices $W, W' \in \prefvertices$, we write $W  \rightsquigarrow W'$ to denote $W'$ is \emph{reachable} from $W$.
		By definition, 
  each vertex $W$ of $G$ is reachable from itself. That is, $W \rightsquigarrow W$ always holds.

		The \ac{pdfa} encodes a preference model $\succeq$ for $\Sigma^*=(2^\calAP)^*$ as follows. 
		Consider two words $w, w' \in \Sigma^*$.
		Let $W, W' \in \prefvertices$ be the two state subsets such that $\delta(q, w) \in W $ and $\delta(q, w') \in W'$ (recall that $\prefvertices$ is a partitioning of $Q$);
		There are four cases: (1) if $W  = W'$, then $w \sim w'$; (2) if $W  \neq W'$ and $W' \rightsquigarrow W $, then $w \succ w'$; (3) if $W  \neq W' $ and $W \rightsquigarrow W'$, then $w' \succ w$; and (4) otherwise, $w \nparallel w'$.

     For an example, see Figure~\ref{fig:pdfa}, which shows the \ac{pdfa} for the preferences $p_1$ through $p_4$ in Figure~\ref{fig:gap_garden}.

	  Next, we describe the construction of \ac{pdfa} given a preference model $\calP = \langle \Phi, \weakpref \rangle$. 
	  The construction involves two steps, namely, the construction of the underlying graph of \ac{pdfa} and the construction of the preference graph.

	\begin{definition}
		\label{def:product-graph}
		Given a preference model $\langle  \Phi, \weakpref \rangle $,
		for each formula $\varphi_i \in \Phi$, let $\calA_i = \langle Q_i, \Sigma, \delta_i, \init_i, F_i\rangle$ be the   \ac{dfa} representing the language of $\varphi_i$.
        %
        The underlying automaton of the \ac{pdfa} representing $\langle  \Phi, \weakpref \rangle $ is the tuple, 
		\[
		\langle Q, \Sigma , \delta, \init \rangle
		\]
		in which $Q = Q_1 \times Q_2 \times \cdots Q_n$ is the set of states in \ac{pdfa};
  $\delta: Q \times \Sigma \rightarrow Q$ is a deterministic transition function where for each $\vec{q}=(q_1, q_2, \cdots, q_n) \in Q$ and $a \in \Sigma$, $\delta(\vec{q}, a) = (\delta_1(q_1, a), \delta_2(q_2, a), \cdots, \delta(q_n, a))$; and $\vec{\init} = (\init_1, \init_2, \cdots, \init_n)$ is the initial state.
  \end{definition}

  We represent each state in $Q$ as a vector $\vec{q}$ and the $i$-th component of $\vec{q}$, denoted as $\vec{q}[i]$, is the state in $Q_i$.

  Algorithm~\ref{alg:pref-graph} describes a procedure to construct the preference graph.  It uses the following definition that slightly abuses the notation $\mathsf{MP}(\cdot)$: For each product state $\vec{q}$, we define the set \[ \mathsf{MP}(\vec{q}) = \mathsf{MP}( \left\{ \varphi_i \in \Phi  \mid \vec{q}[i] \in F_i	\right\} )
	\]
	In words, $\mathsf{MP}(\vec{q}) $ is a set of most preferred formulas satisfied by any word that ends in $\vec{q}$.

	 Given the preference model $\langle \Phi,\weakpref \rangle$ and the underlying automaton $\langle Q, \Sigma , \delta, \init \rangle$ of the \ac{pdfa}, lines \texttt{\ref{line:for}-\ref{line:z}} of Algorithm~\ref{alg:pref-graph} construct a set $Z$ of directed edges such that $(\vec{q}',\vec{q})\in Z$ if 
	 and only if for every $\varphi \in \mathsf{MP}(\vec{q})$, there exists a formula $\varphi' \in \mathsf{MP}(\vec{q}')$  such that $\varphi \weakpref \varphi'$. 
    %
  \begin{figure}[h]
		\centering
		\includegraphics[width=1.0\linewidth]{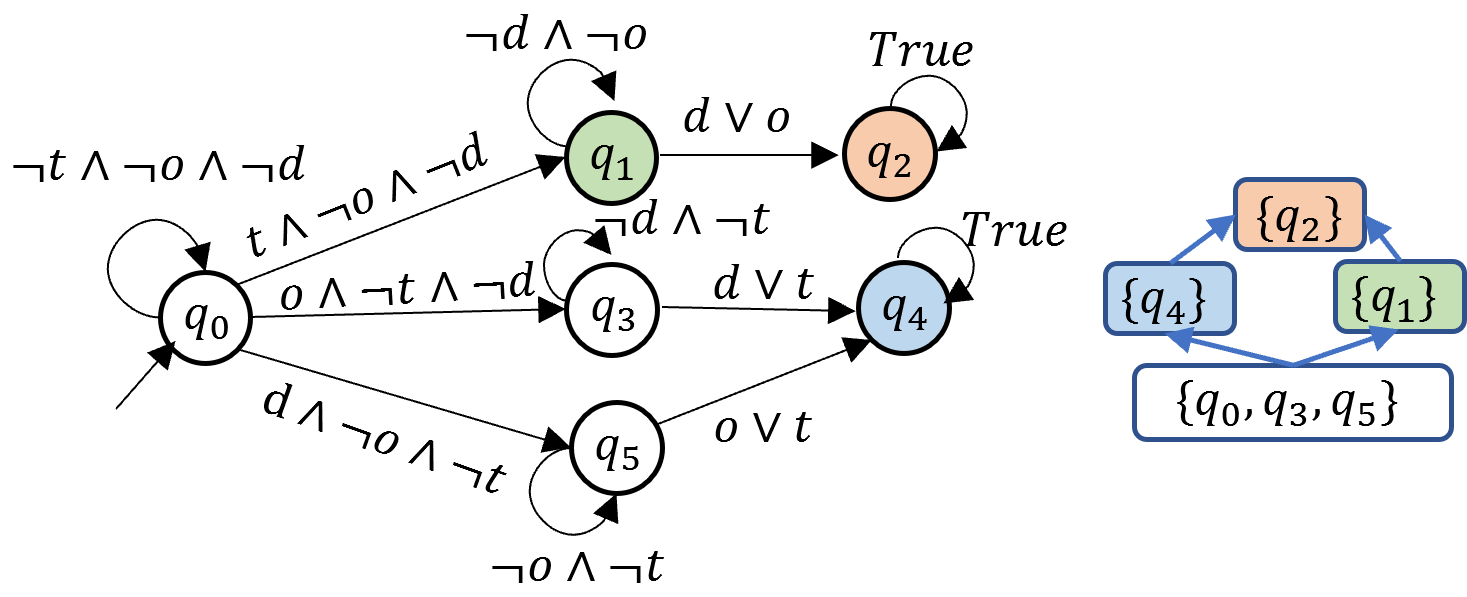}
		\caption{
		 PDFA for the example in Figure~\ref{fig:gap_garden}. 
		 \textbf{Left)} The DFA structure of PDFA.
		 \textbf{Right)} The preference graph of PDFA.
		}
		\label{fig:pdfa}
	\end{figure}
	 Lines \texttt{\ref{line:SCC}-\ref{line:end_2_loop}} of Algorithm~\ref{alg:pref-graph} shows how the set $\prefnodes$ and edges $\prefedges$ of the preference graph are constructed. Using the set of directed edges $Z$, the algorithm computes first the set $\prefnodes$ as the set of strongly connected components of the graph given by state set $Q$ and edges $Z$. Then, a directed edge from an SCC $W$ to another SCC $W'$ is added if there is a state $\vec{q}\in W$ and a state $\vec{q}\in W'$ such that $(\vec{q},\vec{q'})\in Z$.

  In Section~\ref{sec:caseStudy}, we provide a detailed explanation of the construction of the PDFA for the preferences in Figure~\ref{fig:gap_garden}, implemented through our algorithm.

  We next show how the \ac{pdfa} constructed using the product operation in Definition ~\ref{def:product-graph} and Algorithm ~\ref{alg:pref-graph} encodes the exact preference model $\langle \Phi, \weakpref\rangle $.

	\begin{proposition}
		\label{prop:node-property}
		Let $\prefnodes$ be the set of nodes constructed by Algorithm~\ref{alg:pref-graph}. 
        For each $\vec{q} , \vec{q'} \in Q$ for which $\mathsf{MP}(\vec{q}) =  \mathsf{MP}(\vec{q}')$, it holds that $\vec{q}$ and $\vec{q}'$ are included in the same node in $\prefnodes$.
	\end{proposition}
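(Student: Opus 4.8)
The plan is to read the claim off directly from the way the edge set $Z$ is seeded in Algorithm~\ref{alg:pref-graph}, and then to appeal to the definition of a strongly connected component. First I would dispose of the degenerate case $\vec{q} = \vec{q}'$: a single state lies in exactly one block of the partition $\prefnodes$, so the claim is immediate there. Hence I assume $\vec{q} \neq \vec{q}'$.

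The crux is the equality branch of the outer loop (lines \ref{line:for}--\ref{line:z}). When the algorithm processes the pair $(\vec{q}, \vec{q}')$ and finds $\mathsf{MP}(\vec{q}) = \mathsf{MP}(\vec{q}')$, it takes the first conditional branch and inserts \emph{both} ordered pairs $(\vec{q}, \vec{q}')$ and $(\vec{q}', \vec{q})$ into $Z$. Consequently, in the directed graph $\langle Q, Z \rangle$ there is an edge from $\vec{q}$ to $\vec{q}'$ and, symmetrically, an edge from $\vec{q}'$ to $\vec{q}$; the two states are therefore mutually reachable, in fact via single edges, so that $\vec{q} \rightsquigarrow \vec{q}'$ and $\vec{q}' \rightsquigarrow \vec{q}$.

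To finish, I would invoke the semantics of $\mathsf{getSCC}$ on line~\ref{line:SCC}: the set $\prefnodes$ is exactly the collection of strongly connected components of $\langle Q, Z \rangle$, and these components partition $Q$. By definition, two vertices share a strongly connected component precisely when each is reachable from the other. Since $\vec{q}$ and $\vec{q}'$ are mutually reachable, they fall into the same component, i.e., into the same node $\prefnode \in \prefnodes$, which is the desired conclusion.

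I do not anticipate a genuine obstacle here: once one observes that the equality case seeds $Z$ with both directed edges, the result follows immediately from the definition of strong connectivity. The only point deserving an explicit sentence is that $\mathsf{getSCC}$ is applied to the graph built on the \emph{full} edge set $Z$, and that the additional edges produced in the \textbf{else} branch can only coarsen the partition into components (merging blocks or leaving them unchanged), never split them; thus the mutual reachability we exhibit via the two equality-case edges already forces $\vec{q}$ and $\vec{q}'$ into one component regardless of the remaining contents of $Z$.
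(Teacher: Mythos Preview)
Your argument is correct and follows exactly the paper's own route: it appeals to the fact that line~5 of Algorithm~\ref{alg:pref-graph} inserts both ordered pairs into $Z$ whenever $\mathsf{MP}(\vec{q}) = \mathsf{MP}(\vec{q}')$, and then to the definition of strongly connected components. The paper states this in a single sentence, whereas you have simply unpacked the details.
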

	\begin{proof}
		By the construction in Line~5 of Algorithm~\ref{alg:pref-graph} and the definition of strongly connected components \citep{cormen2022introduction}.
	\end{proof}

\begin{proposition}
	\label{prop:edge-property}
	If  $(\vec{q} , \vec{q'})\not \in Z$, then in graph $\langle Q, Z \rangle$ there is no directed path from $\vec{q}$ to $\vec{q'}$.
\end{proposition}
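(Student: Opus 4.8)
The plan is to prove the contrapositive: every directed path in $\langle Q, Z\rangle$ from $\vec{q}$ to $\vec{q'}$ forces the single edge $(\vec{q},\vec{q'})$ to already belong to $Z$. Equivalently, I would show that $Z$ is \emph{transitively closed}, so that forming its transitive closure introduces no new pairs; the stated claim then follows by contraposition. The one ingredient I would use is the explicit characterization of $Z$ obtained from lines \texttt{\ref{line:for}--\ref{line:z}} of Algorithm~\ref{alg:pref-graph} (and stated in the paragraph preceding Figure~\ref{fig:pdfa}): namely $(\vec{p},\vec{q})\in Z$ if and only if for every $\varphi\in\mathsf{MP}(\vec{q})$ there exists $\varphi'\in\mathsf{MP}(\vec{p})$ with $\varphi\weakpref\varphi'$. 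Note in passing that $Z$ is reflexive, since $\mathsf{MP}(\vec{q})=\mathsf{MP}(\vec{q})$ places $(\vec{q},\vec{q})$ in $Z$, which disposes of any degenerate zero-length path.

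First I would establish that $Z$ is transitive. Assume $(\vec{a},\vec{b})\in Z$ and $(\vec{b},\vec{c})\in Z$ and derive $(\vec{a},\vec{c})\in Z$. Fix an arbitrary $\chi\in\mathsf{MP}(\vec{c})$. The edge $(\vec{b},\vec{c})\in Z$ supplies some $\psi\in\mathsf{MP}(\vec{b})$ with $\chi\weakpref\psi$; applying the edge $(\vec{a},\vec{b})\in Z$ to this particular $\psi\in\mathsf{MP}(\vec{b})$ supplies some $\varphi\in\mathsf{MP}(\vec{a})$ with $\psi\weakpref\varphi$. Transitivity of the partial order $\weakpref$ (Assumption~\ref{assume:partial-order}) then gives $\chi\weakpref\varphi$. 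Since $\chi\in\mathsf{MP}(\vec{c})$ was arbitrary, the defining condition for $(\vec{a},\vec{c})\in Z$ is satisfied. This is exactly the chaining argument used earlier to prove transitivity of the induced preorder $\langle\Sigma^\ast,\succeq\rangle$, now read off directly on the $\mathsf{MP}$-sets of product states.

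With transitivity available, I would finish by induction on the length of a directed path $\vec{q}=\vec{q}_0\to\vec{q}_1\to\cdots\to\vec{q}_n=\vec{q'}$ in $\langle Q,Z\rangle$, whose consecutive pairs all lie in $Z$. The base case $n=1$ is the edge itself, and in the inductive step the prefix gives $(\vec{q}_0,\vec{q}_{n-1})\in Z$, which combines with $(\vec{q}_{n-1},\vec{q}_n)\in Z$ through transitivity to yield $(\vec{q},\vec{q'})\in Z$. Contrapositively, $(\vec{q},\vec{q'})\notin Z$ rules out any such path. The only substantive step is the transitivity argument, and the subtlety there is purely the bookkeeping of quantifier direction: because $(\vec{p},\vec{q})\in Z$ encodes a ``for all $\mathsf{MP}(\vec{q})$, exists in $\mathsf{MP}(\vec{p})$'' condition, the middle state $\vec{b}$ must serve as the existential witness set for one edge and the universal domain for the other, so the two witnesses have to be chained in the correct order before transitivity of $\weakpref$ applies. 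Once that ordering is arranged, the remaining path induction is routine.
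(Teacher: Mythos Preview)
Your proposal is correct and follows essentially the same approach as the paper: both arguments establish, via the characterization of $Z$ and transitivity of $\weakpref$, that the existence of a directed path $\vec{q}_0\to\cdots\to\vec{q}_n$ forces $(\vec{q}_0,\vec{q}_n)\in Z$, then conclude by contraposition/contradiction. The only cosmetic difference is that you isolate transitivity of $Z$ as a lemma and finish by induction on path length, whereas the paper chains the $\mathsf{MP}$-witnesses directly along the whole path in one pass.
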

\begin{proof}
	For the sake of contradiction, suppose $(\vec{q},\vec{q}')\notin Z$ but  there is a directed path of length greater than $ 1$ from $\vec{q}$ to $\vec{q}'$. Let this path be $\vec{q}_0 \rightarrow \vec{q}_1 \rightarrow \cdots   \rightarrow \vec{q}_n$ where $\vec{q}_0=\vec{q}$, $\vec{q}_n=\vec{q}'$, and  $\vec{q}_1$ through $\vec{q}_{n-1}$ are intermediate states along the path.
For $i=0,\ldots, n$,  let $X_i = \mathsf{MP}(\vec{q}_i)$. By the construction, for any $\varphi \in X_{i+1}$, there exists a formula $\psi \in X_{i}$ such that $\varphi \weakpref \psi$. Applying the transitivity of the preference $\weakpref$, it holds that for any $\varphi\in X_n = \mathsf{MP}(\vec{q}')$, there exists a formula $\psi \in X_0 = \mathsf{MP}(\vec{q})$ such that $\varphi \weakpref \psi$. 
As a result, $(\vec{q},\vec{q}')\in Z$, contradicting the assumption.
	
\end{proof}
	
	\begin{proposition}
		\label{prop:partition}
		Set $\prefnodes$ constructed by Algorithm~\ref{alg:pref-graph} partitions $Q$.
	\end{proposition}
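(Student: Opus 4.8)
The plan is to reduce the claim to the standard fact that the strongly connected components of any directed graph partition its vertex set. Since Line~\ref{line:SCC} of Algorithm~\ref{alg:pref-graph} sets $\prefnodes \gets \mathsf{getSCC}(\langle Q, Z \rangle)$, the elements of $\prefnodes$ are by definition the SCCs of the directed graph whose vertex set is $Q$ and whose edge set is $Z$. It therefore suffices to argue that these SCCs are exactly the equivalence classes of the mutual-reachability relation on $Q$, together with the observation that the equivalence classes of any equivalence relation partition the underlying set (nonempty, pairwise disjoint, and covering), which is precisely the notion of partition required in Definition~\ref{def:pdfa}.

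First I would introduce the mutual-reachability relation $\approx$ on $Q$, declaring $\vec{q} \approx \vec{q}'$ whenever $\vec{q}'$ is reachable from $\vec{q}$ and $\vec{q}$ is reachable from $\vec{q}'$ in $\langle Q, Z \rangle$, and then verify the three relation axioms. Reflexivity is the only point that deserves explicit attention: I would note that the outer loop (Line~\ref{line:outer_loop}) ranges over \emph{all} ordered pairs in $Q \times Q$, including the diagonal pairs $(\vec{q}, \vec{q})$, for which $\mathsf{MP}(\vec{q}) = \mathsf{MP}(\vec{q})$ holds trivially; hence the self-loop $(\vec{q}, \vec{q})$ is added to $Z$ for every $\vec{q} \in Q$, so each vertex reaches itself and $\vec{q} \approx \vec{q}$. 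Symmetry is immediate from the symmetric phrasing of the definition, and transitivity follows by concatenating the witnessing directed paths.

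Because $\approx$ is an equivalence relation, its classes are nonempty, pairwise disjoint, and cover $Q$. It then remains only to observe that an equivalence class of $\approx$ is exactly a maximal set of mutually reachable vertices, i.e.\ a strongly connected component, so the set $\prefnodes$ returned by $\mathsf{getSCC}$ coincides with the collection of these classes. This immediately yields that $\prefnodes$ is a partition of $Q$.

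The main obstacle is mild and amounts to bookkeeping: I must make sure that the notion of SCC returned by $\mathsf{getSCC}$ is the mutual-reachability equivalence-class decomposition (rather than, say, a condensation that could merge blocks or discard singleton components), and that reflexivity is genuinely available so that product states which happen to be isolated in $Z$ are not omitted from every block. Both concerns are settled by the self-loop observation above and by appealing to the standard development of strongly connected components in \citep{cormen2022introduction}; notably, no structural property of $Z$ beyond its being a set of directed edges on $Q$ is used, so the argument is insensitive to how the preference relation $\weakpref$ populated $Z$.
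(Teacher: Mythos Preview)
Your proposal is correct and follows essentially the same approach as the paper, which dispatches the claim in a single sentence by appealing to the standard property of strongly connected components in \cite{cormen2022introduction}. Your write-up is simply a more detailed unpacking of that citation; the self-loop observation is sound but not strictly needed, since reachability via the length-$0$ path already makes every vertex mutually reachable from itself regardless of whether $(\vec{q},\vec{q})\in Z$.
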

	\begin{proof}
		This property automatically holds due to the property of strongly connected components \citep{cormen2022introduction}.
	\end{proof}

\begin{theorem}
	Let $\langle \Sigma^\ast, \succeq \rangle $ be the preference model induced by the semantics of $\langle \Phi, \weakpref \rangle $ (Definition ~\ref{def:preference-ltl}). Given the \ac{pdfa}  $\mathcal{A} = \langle  Q, \Sigma, \delta, \init, G \rangle $ constructed for the preference model $\langle  \Phi, \weakpref \rangle$ using Definition~\ref{def:product-graph} and Algorithm~\ref{alg:pref-graph}, for any $w,w'\in \Sigma^\ast$ let $W, W' \in \prefnodes$ be the nodes such that $\delta(\init, w) \in \prefnode$ and $\delta(\init, w') \in \prefnode'$, the following statements are established:
	\begin{itemize}
	
		\item (Case 1)  
		  $ \prefnode  = \prefnode'$ if and only if
		$w'\sim  w$.
			\item (Case 2)  $\prefnode \ne \prefnode'$ and 
		$\prefnode \rightarrow \prefnode' $ if and only if $w' \succeq w $ and $w\not\sim w'$.

\item (Case 3)  $\prefnode \ne \prefnode'$ and 
		$\prefnode' \rightarrow \prefnode $ if and only if $w \succeq w' $ and $w\not\sim w'$.
  
		\item (Case 4) $w\nparallel w'$, otherwise.
	\end{itemize} 
\end{theorem}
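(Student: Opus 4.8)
The plan is to reduce every claim about the preference graph $G=(\prefnodes,\prefedges)$ to an equivalent claim about the raw edge set $Z$ and the induced preference $\succeq$ on $\Sigma^\ast$, using two bridging facts that I would establish first. The first is that reading a word and inspecting its endpoint agree: since $\vec q=\delta(\init,w)$ satisfies $\vec q[i]\in F_i$ exactly when $w\models\varphi_i$, we get $\mathsf{MP}(w)=\mathsf{MP}(\vec q)$, and likewise $\mathsf{MP}(w')=\mathsf{MP}(\vec q')$ for $\vec q'=\delta(\init,w')$. The second is to read the $Z$-membership test directly off lines~\ref{line:for}--\ref{line:z} of Algorithm~\ref{alg:pref-graph}: a pair $(\vec q',\vec q)$ is placed in $Z$ precisely when, for every $\varphi\in\mathsf{MP}(\vec q)$, there is a $\varphi'\in\mathsf{MP}(\vec q')$ with $\varphi\weakpref\varphi'$ (the equal-$\mathsf{MP}$ branch being the reflexive special case of this condition). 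Combining these with Definition~\ref{def:preference-ltl} yields the key correspondence
\[
(\vec q,\vec q')\in Z \iff w'\succeq w, \qquad (\vec q',\vec q)\in Z \iff w\succeq w'.
\]

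Next I would lift this edge-level correspondence to a reachability-level one. Proposition~\ref{prop:edge-property} states that a missing $Z$-edge forbids any directed path, so reachability in $\langle Q,Z\rangle$ implies the direct edge; the converse is trivial, hence reachability in $\langle Q,Z\rangle$ coincides with membership in $Z$, and in particular $Z$ is reflexive and transitive---a preorder on $Q$. Consequently its strongly connected components (the nodes $\prefnodes$, by Line~\ref{line:SCC} and Proposition~\ref{prop:partition}) are exactly the classes of mutual $Z$-reachability, and the condensation is a partial order on $\prefnodes$. Thus for $\vec q\in W$ and $\vec q'\in W'$: (i) $W=W'$ iff $(\vec q,\vec q')\in Z$ and $(\vec q',\vec q)\in Z$; and (ii) $W\rightarrow W'$ iff $(\vec q,\vec q')\in Z$, reading $W\rightarrow W'$ as reachability in $G$. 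I expect the main obstacle to be verifying that $Z$-membership between two distinct SCCs is independent of the chosen representatives, so that node-level reachability faithfully mirrors $Z$ (and indeed collapses to the existence of a single $\prefedges$-edge); this follows cleanly from transitivity of $Z$ together with the fact that any two states in a common SCC are mutually $Z$-connected, but it is the place where the argument must be done carefully.

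Finally I would assemble the four cases by substituting the correspondence into (i)--(ii). For Case~1, $W=W'$ iff $w'\succeq w$ and $w\succeq w'$; the earlier lemma asserting that $w_1\succeq w_2$ and $w_2\succeq w_1$ force $\mathsf{MP}(w_1)=\mathsf{MP}(w_2)$, together with reflexivity of $\weakpref$ for the converse, converts this into $\mathsf{MP}(w)=\mathsf{MP}(w')$, that is $w\sim w'$. For Case~2, $W\neq W'$ and $W\rightarrow W'$ translate via Case~1 and (ii) into $w\not\sim w'$ and $w'\succeq w$, which is exactly $w'\succ w$; Case~3 is its mirror image under exchanging $w$ and $w'$. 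Case~4 is the residue: $W\neq W'$ with neither node reaching the other gives $w\not\sim w'$, $\neg(w'\succeq w)$, and $\neg(w\succeq w')$, which is the definition of $w\nparallel w'$. To close I would observe that both sides exhaust and partition the possibilities---on the graph side because the condensation is a partial order, and on the preference side because $\succeq$ is a preorder---so the four biconditionals hold simultaneously.
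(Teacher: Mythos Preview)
Your proposal is correct and follows essentially the same route as the paper: both hinge on the identity $\mathsf{MP}(w)=\mathsf{MP}(\delta(\init,w))$, the characterization of $Z$-membership in terms of Definition~\ref{def:preference-ltl}, and Proposition~\ref{prop:edge-property} to collapse reachability in $\langle Q,Z\rangle$ to a direct edge. Your presentation is in fact tidier than the paper's case-by-case argument, since you make the representative-independence step explicit (the paper's Case~2 silently passes from ``some $q\in W$, $q'\in W'$ with $(q,q')\in Z$'' to the specific endpoints $\vec q,\vec q'$), and you correctly observe that transitivity of $Z$ forces $\prefedges$ to be transitively closed so that reachability in $G$ coincides with a single edge.
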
 
	\begin{proof}
		Let $\vec{q} = \delta(\init, w)$ and $\vec{q}' =\delta(\init, w')$.
		By construction of the function $\mathsf{MP}(\cdot)$ and the product operation in Definition~\ref{def:product-graph}, the following equation holds:
		\begin{align*}
		\mathsf{MP}(w) 
		&= \mathsf{MP}(\{\varphi_i\mid \delta_i(\init_i, w) \in F_i \}) \\
		&=  \mathsf{MP}(\{\varphi_i\mid \vec{q}[i] \in F_i \}) \\
		&=  \mathsf{MP}(\vec{q}) 
		\end{align*}

Case 1:  ($\Rightarrow$)  If $\prefnode=\prefnode'$, then both $\vec{q}, \vec{q}'\in \prefnode$. This means that $\vec{q}\rightsquigarrow \vec{q}'$ and $\vec{q}' \rightsquigarrow \vec{q}$.  By proposition~\ref{prop:edge-property}, it is only possible that $(\vec{q},\vec{q}')\in Z$ and $(\vec{q}',\vec{q})\in Z$. $\mathsf{MP}(\vec{q}) =  \mathsf{MP}(\vec{q}')$ can be derived due to the antisymmetric property in the partial order of $\langle \Phi, \weakpref \rangle$.

($\Leftarrow$)   If $w\sim w'$, then $\mathsf{MP}(\vec{q})  = \mathsf{MP}(\vec{q}')$ and therefore $\prefnode=\prefnode'$.

Case 2: ($\Rightarrow$) If $\prefnode \rightarrow \prefnode'$, 
then given the construction of the preference graph by lines~\ref{line:begin_2_loop}-\ref{line:end_2_loop} of Algorithm~\ref{alg:pref-graph}, there exist $q \in W$ and $q' \in W'$ such that $(q, q') \in Z$.
Therefore, by construction in Algorithm~\ref{alg:pref-graph}, for any $\varphi' \in \mathsf{MP}(\vec{q}')$, there is a $\varphi \in \mathsf{MP}(\vec{q})$ such that $\varphi'\weakpref \varphi$. Then, by Def.~\ref{def:preference-ltl}, $w' \succeq w$.

($\Leftarrow$) If $w'\succeq w$, then $(\vec{q},\vec{q}')\in Z$. Because $W \ne W'$, then there is no path from $\vec{q}' $ to $\vec{q}$. As a result, it is not the case that $w   \succeq w'$. Thus, $w\not\sim w'$.  

Case 3: proof similar to the proof of Case 2.

Case 4 is a direct consequence from Cases 1, 2 and 3.
	\end{proof}

Using the computational model \ac{pdfa}, we can directly compute the set $\{w\}^\uparrow$ for any $w\in \Sigma^\ast$.

\begin{lemma}
	\label{lem:wUp}
	For each word $w\in \Sigma^\ast$, if $\delta(\init, w)\in \prefnode$ for some $\prefnode\in \prefnodes$, then 
     the upper closure of $w$ is 
     \begin{multline} 
	 	\{ w\}^\uparrow = \{w' \in \Sigma^\ast \mid \exists \prefnode' \in \prefnodes,\\ \delta(\init, w')  \in \prefnode' \text{ and } \prefnode \rightsquigarrow \prefnode'\},
	\end{multline}
%
 and the lower closure of $w$ is 
 \begin{multline} 
	 	\{ w\}^\downarrow = \{w' \in \Sigma^\ast \mid \exists \prefnode' \in \prefnodes,\\ \delta(\init, w')  \in \prefnodes' \text{ and } \prefnode' \rightsquigarrow \prefnode\}
	\end{multline}
%
\end{lemma}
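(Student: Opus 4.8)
The plan is to reduce the statement to the four-case characterization established in the preceding theorem, together with the definition of the upper/lower closure of a singleton. First I would unfold the closures on $\{w\}$: since $X^\uparrow = \{y \mid y \succeq x \text{ for some } x \in X\}$ and $X^\downarrow = \{y \mid y \preceq x \text{ for some } x \in X\}$, the singleton case gives $\{w\}^\uparrow = \{w' \in \Sigma^\ast \mid w' \succeq w\}$ and $\{w\}^\downarrow = \{w' \in \Sigma^\ast \mid w \succeq w'\}$. By Proposition~\ref{prop:partition} the node $W' \in \prefnodes$ with $\delta(\init, w') \in W'$ is uniquely determined by $w'$, so it suffices to prove the two pointwise equivalences $w' \succeq w \iff W \rightsquigarrow W'$ and $w \succeq w' \iff W' \rightsquigarrow W$; substituting these into the displayed sets yields exactly the claimed formulas.

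The core step is the first equivalence $w' \succeq w \iff W \rightsquigarrow W'$ (the second follows by interchanging the roles of $w$ and $w'$). I would decompose $w' \succeq w$ using the definitions in Section~\ref{sec:def}: $w' \succeq w$ holds iff either $w' \sim w$ or $w' \succ w$, where $w' \succ w$ abbreviates ``$w' \succeq w$ and $w \not\sim w'$.'' Case~1 of the theorem gives $w' \sim w \iff W = W'$, and Case~2 gives ``$w' \succeq w$ and $w \not\sim w'$'' $\iff$ ``$W \ne W'$ and $W \rightsquigarrow W'$.'' Combining the two, $w' \succeq w$ holds iff $W = W'$ or ($W \ne W'$ and $W \rightsquigarrow W'$), which collapses to $W \rightsquigarrow W'$ because reachability is reflexive ($W \rightsquigarrow W$ always holds).

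The one point requiring care is that the theorem is phrased with single edges $W \rightarrow W'$, whereas the lemma uses reachability $W \rightsquigarrow W'$. I would close this gap with Proposition~\ref{prop:edge-property}, which shows that the state-level relation $Z$ is transitive: a $Z$-path from $\vec q$ to $\vec q'$ forces $(\vec q, \vec q') \in Z$. Since every two states inside one strongly connected component are mutually $Z$-related, any node-level reachability witness lifts to a single pair $(\vec q, \vec q') \in Z$ with $\vec q \in W$ and $\vec q' \in W'$; the edge-construction in lines~\ref{line:begin_2_loop}--\ref{line:end_2_loop} of Algorithm~\ref{alg:pref-graph} then places a direct edge $W \rightarrow W'$. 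Hence, for distinct nodes, $W \rightarrow W'$ and $W \rightsquigarrow W'$ are interchangeable, so Cases~2 and~3 apply verbatim to reachability.

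Putting these together gives $\{w\}^\uparrow = \{w' \mid W \rightsquigarrow W'\}$, and the symmetric argument (swapping $w,w'$ and $W,W'$, and invoking Case~3 in place of Case~2) gives $\{w\}^\downarrow = \{w' \mid W' \rightsquigarrow W\}$. I expect the edge-versus-reachability reconciliation to be the only real obstacle; once transitivity of $Z$ is in hand, the remainder is a mechanical case split driven directly by the characterization theorem.
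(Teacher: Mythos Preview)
Your proposal is correct. The paper does not actually supply a proof for this lemma; it only states that it ``directly follows from the transition function in $\pdfa$ and the transitivity property of the preference relation'' and omits the details. Your argument fills in precisely those details: you invoke the four-case characterization theorem (which is driven by the transition function) and then use Proposition~\ref{prop:edge-property} to show that the edge relation on $\prefnodes$ is already transitive, which is what is needed to reconcile the theorem's single-edge statement with the reachability in the lemma. This is exactly the content the paper gestures at, so your approach is consistent with---and more explicit than---the paper's omitted proof.
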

The lemma directly follows from the transition function in $\pdfa$ and the transitivity property of the preference relation and thus 
the proof is omitted. 
	 
	\begin{example}
        Consider three \ac{ltlf} formulas $\varphi_1 = \Eventually a$, $\varphi_2 = \Eventually b$, and $\varphi_3 = \neg \Eventually a \wedge \neg \Eventually b$. 
        Also, assume 
        $\varphi_1 \strictpref \varphi_2$, $\varphi_1 \strictpref \varphi_3$, and $\varphi_2 \nparallel \varphi_3$.
        The left column of Figure~\ref{fig:pdfa_constrcution} shows for each of the three LTL$_f$ formula, a DFA that encodes that formula.
        The column in right shows the \ac{pdfa} our algorithm constructs for these three formulas and the associated user preferences.
        In this \ac{pdfa}, we have written in blue for each state $x$, $\outcomes(x)$, the set of formulas satisfied when the word ends at state $x$.
        For each state $x$, we have also written in red, $\mathsf{MP}(x)$---the most preferred formulas among those formulas in $\outcomes(x)$.
        Accordingly,
        $\outcomes(qpr) = \{ \varphi_3 \}$, $\outcomes(q'pr')=\{\varphi_1 \}$, $\outcomes(qp'r')=\{\varphi_2\}$, and
        $\outcomes(q'p'r')=\{\varphi_1, \varphi_2\}$.
        Also, $\mathsf{MP}(qpr)=\{\varphi_3 \}$, $\mathsf{MP}(q'pr')=\{\varphi_1 \}$, $\mathsf{MP}(qp'r')=\{\varphi_2 \}$, and $\mathsf{MP}(q'p'r')=\{\varphi_1\}$.
        Note that because $\varphi_1 \strictpref  \varphi_2$, $\mathsf{MP}(q'p'r') = \mathsf{MP}(\{\varphi_1, \varphi_2\}) = \{\varphi_1 \}$.
        %
        Given that $\mathsf{MP}(q'p'r')=\mathsf{MP}(q'pr')$, states $q'p'r'$ and $q'pr'$ belong to the same vertex in the preference graph.
   %
	\end{example}
 
 \begin{figure}[h]
		\centering
		\includegraphics[width=1.0\linewidth]{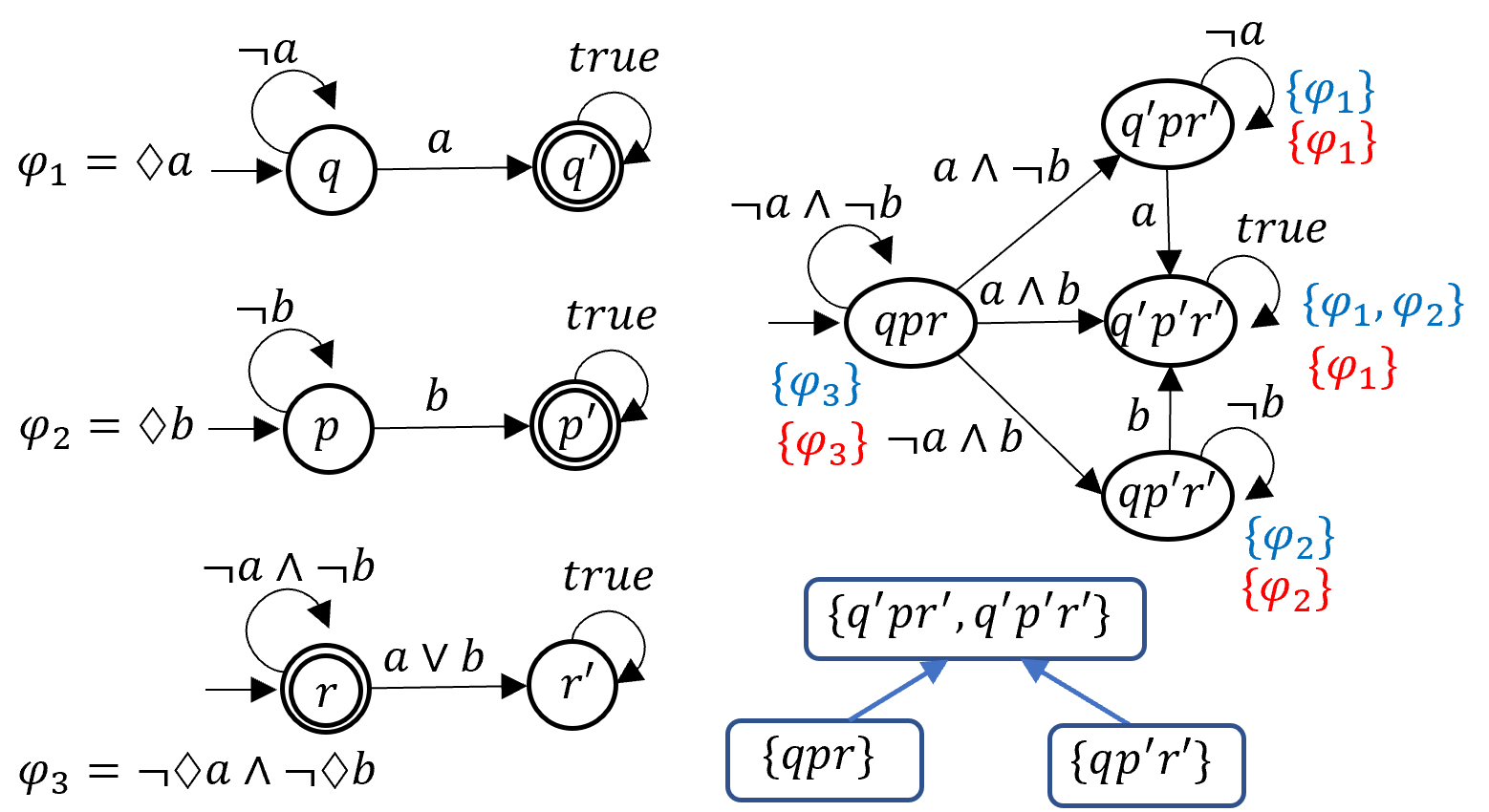}
		\caption{
		 PDFA for the example in Figure~\ref{fig:gap_garden}. 
		 \textbf{Left)} Three DFAs for three LTL$_f$ formulas $\varphi_1$, $\varphi_2$, and $\varphi_3$, for which the user preference is: $\varphi_1 \strictpref \varphi_2$,  
   $\varphi_1 \strictpref \varphi_3$, $\varphi_2 \nparallel \varphi_3$.
		 \textbf{Right)} The \ac{pdfa} constructed by our algorithm for the LTL$_f$ formulas and the preference over them. The output of each state---the set of formulas that satisfies every string that ends at that state---is shown in blue and the most preferred formulas for each state is shown in red.
		}
		\label{fig:pdfa_constrcution}
	\end{figure}

	\section{Synthesizing a  Most-Preferred Policy}
	\label{sec:alg}
  With the computational model \ac{pdfa} representing the partially-ordered temporal goals, we are ready to present a planning algorithm to solve Problem~\ref{problem:main-problem}.
The algorithm computes for a given \ac{tlmdp}, a policy that is most-preferred given the user preferences specified by a given \ac{pdfa}. 
The first step is to augment the planning state space with the states of the \ac{pdfa}. 
With this augmented state space, we can relate the preferences over traces in the \ac{mdp} to a preference over subsets of terminating states in a product \ac{mdp}, defined as follows.
	\begin{definition}[Product MDP]
	\label{def:prod}
	Let $M = \langle S, A:=\Sigma_{s \in S}A_s, \mathbf{P}, s_0, s_\bot, \calAP, L \rangle$ and 
	 $\pdfa = \langle Q, \Sigma, \delta, \init, G:=(\prefnodes, \prefedges)  \rangle$ be respectively the \ac{tlmdp} and the \ac{pdfa}. 
  The product of $M$ and $\pdfa$ is a tuple $\calM = (X, A:=\bigcup_{x \in X}A_x, \mathbf{T}, x_0, X_G, \calG := (\augnodes, \augedges))$ in which
	\begin{enumerate}
	    \item $X =  S \times Q$ is the state space;
	    \item $A$ is the action space, where for each $x=(s, q) \in X$, $A_x = A_s$ is the set of available actions at state $x$;
	    \item \label{itm:T} $\mathbf{T}: X \times A \rightarrow \dist(X)$ is the transition function such that for each state $(s, q) \in X$, action $a \in A$, and state $(s', q') \in X$;
	    \begin{multline*}
     \mathbf{T}((s, q), a, (s', q')) = \\  {
    \begin{cases}
        \mathbf{P}(s, a, s') & \mbox{\text{if} $q' = \delta(q, L(s')),$} \qquad\hfill \\
        0 & \text{otherwise};
    \end{cases}
    }
\end{multline*}
	    \item $x_0 = (s_0, \delta(\init, L(s_0)))$ is the initial state;
	    \item $X_G =  \{s_\bot \} \times Q$ is the set of terminating states;
	    \item \label{itm:pgraph} $\calG = (\augnodes, \augedges)$ is the preference graph, in which, letting $\augnode_i = \{s_\bot\}\times W_i$ for each $W_i \in \prefnodes$, 
	    \begin{itemize}
	        \item $\augnodes = \{\augnode_i\mid i=1,\ldots, \abs{\prefnodes}\}$ is the vertex set of the graph, and
	        \item $\augedges$ is the edge set of the graph such that $(\augnode_i, \augnode_j)\in \augedges$ if and only if $(W_i, W_j ) \in E$.
	    \end{itemize}
	    \end{enumerate}
	\end{definition}

 The preference graph of this MDP has been directly lifted from the one defined for the \ac{pdfa}. Given $Y, Y' \in \augnodes$, we use $Y \rightsquigarrow Y'$ to denote 
	 that $Y'$ is reachable from $Y$ in the preference graph $\calG$. Again, every $Y$ is reachable from itself.

 \begin{example}
 \label{ex:prod_pref_graph}
     Continuing with the example in Figure~\ref{fig:pdfa}, we have $\augnode_1 = \{ s_\bot\} \times W_1 = \{(s_\bot, q_2)\}$, $\augnode_2 =\{s_\bot\} \times  W_2=\{(s_\bot, q_4)\}$, $\augnode_3 = \{s_\bot \} \times W_3 = \{(s_\bot, q_1)\}$, and $\augnode_4 = \{s_\bot\} \times W_4 = \{(s_\bot, q_0), (s_\bot, q_3), (s_\bot, q_5)\}$.
 \end{example}

 Next, we show how to compute a stochastic nondominated policy in the sense of Definition~\ref{def:sto_dominance} through solving a \ac{momdp}. The existence of such a \ac{momdp} is guaranteed by the multi-utility representation theorem \cite[Proposition~1]{ok2002utility}, which states that for every partial order $\langle U, \succeq \rangle$ defined over a finite set $U$, there exists a vector-valued utility function $\mathbf{u}: U \rightarrow \mathbb{R}^n$ such that for any $x, y \in U$, $x \succeq y$ if and only if $\mathbf{u}(x) \geq \mathbf{u}(y)$ where $\geq$ is element-wise \footnote{Specifically, the multi-utility representation theorem \cite[Proposition~1]{ok2002utility} requires the partial order $\succeq$ over the set $U$ to be representable as an intersection of finitely many linear orders. However, \cite{dushnik1941partially,fishburn1985interval} have proved that every partial order over a finite set can be represented as an intersection of finitely many linear orders.}. 
 
      We extend the notions related to stochastic ordering for state subsets of the product MDP $\calM$, constructed  
		   in Definition~\ref{def:prod}, as follows:
     %

   \begin{definition}
     \label{def:prod_inc_dec_set}
     Let $\mathbf{\augnode} \subseteq \augnodes$ be a set of vertices in the preference graph $\mathcal{G}$. The upper closure of $\mathbf{\augnode}$ with respect to $\mathcal{G}$ is defined by 
     \[
	\mathbf{\augnode}_i^\uparrow = \{\augnode' \mid \exists \augnode \in \mathbf{\augnode}, \augnode\rightsquigarrow \augnode' \}
	\] and the lower closure of $\augnode$ is defined by
	\[
	\mathbf{\augnode}_i^\downarrow = \{\augnode' \mid \exists \augnode \in \mathbf{\augnode}, \augnode' \rightsquigarrow \augnode  \}.
	\] 
    Also, $ \mathbf{\augnode}  \subseteq \augnodes$ is called an \emph{increasing set} if $ \mathbf{\augnode}   = \mathbf{\augnode}  ^\uparrow$.
     \end{definition}
    %
    These sets are used to define a stochastic ordering type as follows:
	\begin{definition}
		Let $\mathfrak{E}_{st}(\augnodes), \mathfrak{E}_{wk}(\augnodes), \mathfrak{E}_{wk*}(\augnodes)$ denote the strong, weak, and weak* orderings, respectively, 
		where 
		\[
		\mathfrak{E}_{st}(\augnodes) = \{\text{ all increasing sets in $2^\augnodes$} \},
		\]
		\[
		\mathfrak{E}_{wk}(\augnodes) = \{\{\augnode\}^\uparrow \mid \augnode\in \augnodes \} \cup \{\augnodes, \emptyset\},
		\]
		\[
		\mathfrak{E}_{wk*}(\augnodes) = \{\augnodes\setminus \{\augnode\}^\downarrow \mid \augnode\in \augnodes \} \cup \{\augnodes, \emptyset\},
		\]
		\end{definition}   

%

For a stochastic ordering $\mathfrak{E} \in \{\mathfrak{E}_{st}, \mathfrak{E}_{wk}, \mathfrak{E}_{wk*}\}$, 
let the elements of set $\mathfrak{E}(\augnodes) \setminus \{\augnodes, \emptyset \}$ to be indexed arbitrary as $\mathbf{\augnode}_1, \mathbf{\augnode}_2, \cdots, \mathbf{\augnode}_N$.
We use this indexed set in the following construction to make a multi-objective MDP. 
    \begin{definition}[\ac{momdp}]
	\label{def:goalMDP}
	Given a stochastic ordering $\mathfrak{E} \in \{\mathfrak{E}_{st}, \mathfrak{E}_{wk}, \mathfrak{E}_{wk*}\}$, the multi-objective MDP (MOMDP) associated with the product MDP $\calM = \langle X, A, \mathbf{T}, x_0, X_G, \calG := (\augnodes, \augedges) \rangle$ in Definition~\ref{def:prod} and the stochastic ordering $\mathfrak{E}$, is a tuple $\calP = \langle X, A:=\bigcup_{x \in X} A_x, \mathbf{T}, x_0, X_G, \calZ=\{Z_1, Z_2, \cdots, Z_N \} \rangle$ in which $X$, $A$, $\mathbf{T}$, $x_0$, and $X_G$ are  the same elements in $\calM$ and for each $\mathbf{Y}_i \in \mathfrak{E}(\augnodes) \setminus \{\augnodes, \emptyset \}$, $Z_i = \bigcup_{Y \in \mathbf{Y}_i} Y$.
	The $i$-th objective in the \ac{momdp} is to maximize the probability for reaching the set $Z_i$.
	\end{definition}
  	Note that each $Z_i$ is a subset of goal states $X_G$, and that
    the intersection of two distinct goal subsets $Z_i$ and $Z_j$ may not be empty.

    \begin{remark}
    We exclude $\augnodes$ and $\emptyset$ from the construction of the multi-objective MDP. This is because we consider only proper policies and under any proper policy, any state in $X_G$ is reached with probability one. As a result, the probability of reaching the objectives $\emptyset$ and $\augnodes$ are always $0$ an $1$, respectively, regardless of the chosen stochastic ordering.
    \end{remark}
	 
    To illustrate the construction of $\mathcal{P}$, we continue with our running example.
\begin{example}
    Using the running  example in Figure~\ref{fig:pdfa}, for which the preference graph of the product MDP is shown in Example~\ref{ex:prod_pref_graph}, we have
    $\{ Y_1\}^\uparrow = \{ Y_1\}$, $\{Y_2 \}^\uparrow = \{Y_1, Y_2 \}$, $\{Y_3\}^\uparrow = \{Y_1, Y_3 \}$, and $\{Y_4\}^\uparrow = \{Y_1, Y_2, Y_3, Y_4 \}$, and as a result,
    \[
	\mathfrak{E}_{wk}(\augnodes) \setminus \{\augnodes, \emptyset \} = \{ \{Y_1\}, \{Y_1, Y_2\}, \{Y_1, Y_3\} \},
	\]
  and thus, under weak-stochastic ordering, the MOMDP will have the following objectives
    \[
	Z_1 = Y_1, Z_2 = Y_1 \cup Y_2, \text{and } Z_3 = Y_1 \cup Y_3.
	\]

Also, we have $\{Y_1\}^\downarrow = \{Y_1, Y_2, Y_3, Y_4 \}$, $\{Y_2\}^\downarrow = \{Y_2, Y_4 \}$, $\{Y_3\}^\downarrow = \{Y_3, Y_4 \}$, and
 \[
	\mathfrak{E}_{wk*}(\augnodes) \setminus \{\augnodes, \emptyset \}= \{ \{Y_1, Y_3\}, \{Y_1, Y_2\}, \{Y_1, Y_2, Y_3\} \},
	\]
 and therefore, under weak*-stochastic ordering, the MOMDP will have the following objectives 
    \[
	Z_1 = Y_1 \cup Y_3, Z_2 = Y_1 \cup Y_2, \text{and } Z_3 = Y_1 \cup Y_2 \cup Y_3.
	\]

 Furthermore, 
\[
	\mathfrak{E}_{st}(\augnodes) \setminus \{\augnodes, \emptyset \} = \{ \{Y_1\}, \{Y_1, Y_2\}, \{Y_1, Y_3\}, \{Y_1, Y_2, Y_3\} \},
	\]
 and hence, under strong-stochastic ordering, the MOMDP will have the following objectives
 \begin{multline}
     Z_1 = Y_1, Z_2 = Y_1 \cup Y_2, Z_3 = Y_1 \cup Y_3, \text{and } \\ Z_3 = Y_1 \cup Y_2 \cup Y_3. \notag
 \end{multline}

Accordingly, the objectives for each of the stochastic orderings in terms of the temporal goals in Figure~\ref{fig:gap_garden} can
be summarized as Table~\ref{tab:objectives}.
 
\end{example}

        
        

\begin{table}[h!]
	    \setlength{\tabcolsep}{0.2em}
    \centering    \captionof{table}{Objectives of different stochastic ordering types for the temporal goals in Figure~\ref{fig:gap_garden}.}

\begin{tabular}{cc}
        \hline \hline
          Stochastic Ordering &   Objectives \\ 
        \hline
        Weak &   $\{p_1\}, \{p_1, p_2\}, \{p_1, p_3\}$	 \\
        
         Strong &  $\{p_1\}, \{p_1, p_2\}, \{p_1, p_3\}, \{p_1, p_2, p_3\}$	 \\
        
         Weak*  &  $\{p_1, p_2\}, \{p_1, p_3\}, \{p_1, p_2, p_3\}$  \\
         \hline
        \end{tabular}
    \label{tab:objectives}
\end{table}

    Given the \ac{momdp} in Definition~\ref{def:goalMDP}, for a given randomized, finite-memory policy $\mu: X^* \rightarrow \dist(A)$, we can compute the value vector of $\mu$ as a $N$-dimensional vector $\mathbf{V}^\mu=[\mathbf{V}_1^\mu, \mathbf{V}_2^\mu, \cdots, \mathbf{V}_N^\mu]$ where for each $i$, $\mathbf{V}_i^\mu$ is the 
    probability of reaching states of $Z_i$ by following policy $\mu$, starting from the initial state.
  
    Given a randomized, memoryless policy $\mu: X \rightarrow \dist(A)$, to compute its value vector $\mathbf{V}^\mu$, we first set for each goal state $x_g \in X_G$, $\mathbf{V}^\mu(x_g)$ to be the vector such that for each $i \in \{1, \cdots, n \}$, $\mathbf{V}^\mu_i(x_g) = 1$ if $x_g \in Z_i$, and otherwise $\mathbf{V}^\mu_i(x_g) = 0$. Then we compute the values of the non-goals states $x \in X \setminus X_G$ via the Bellman equation
    \begin{equation}
        \mathbf{V}^{\mu}(x) = \sum_{a \in A} \left ( \mu(x, a) \sum_{x' \in X} \mathbf{T}(x, a, x') \mathbf{V}^{\mu}(x') \right ).
    \end{equation}

    \begin{definition}
    \label{def:paretoDomPolicy}
               Given two proper polices $\mu$ and $\mu'$ for $\calM$, it is said that $\mu$ \emph{Pareto dominates} $\mu'$, 
               denoted $\mu > \mu'$, if for each $i \in \{1, \cdots, N\}$, $\mathbf{V}_i^{\mu} \geq \mathbf{V}_i^{\mu'}$, and for at least one $j \in \{1, \cdots, n\}$, $\mathbf{V}_j^{\mu} > \mathbf{V}_j^{\mu'}$.
    \end{definition}
    Intuitively, $\mu$ Pareto dominates $\mu'$ if, compared to $\mu'$, it increases the probability of reaching at least a set $Z_j$ without reducing the probability of reaching other sets $Z_i$'s.

    \begin{definition}
  \label{def:nonDomPolicy}
             A proper policy $\mu$ for the MOMDP   is \emph{Pareto optimal} if for no proper policy $\mu'$ for the MOMDP it holds that $\mu' > \mu$.
    \end{definition}
    In words, a policy is Pareto optimal if it is not dominated by any policy.
   The \emph{Pareto front} is the set of all Pareto optimal policies. 
    %
     It is well-known that the set of memoryless policies suffices for achieving the Pareto front ~\citep{chatterjee2006markov}. Thus, we restrict to computing memoryless policies.

With this in mind, we present the following result.
    \begin{theorem}
    \label{thm:pareto-weakstochastic}
        Assume the MOMDP $\calP$ in Definition~\ref{def:goalMDP} is constrcuted under a stochastic ordering $\mathfrak{E} \in \{\mathfrak{E}_{st}, \mathfrak{E}_{wk}, \mathfrak{E}_{wk*}\}$ and let $\mu: X \rightarrow \dist(A)$ be a policy for $\calP$.
        Construct policy $\pi: S^* \rightarrow \dist(A)$ for the \ac{tlmdp} $M$ such that for each $\rho=s_0 s_1 \cdots s_n \in S^*$ it is set
        $\pi(\rho) = \mu((s_n, \delta(\iota, \trace(\rho))))$.
        If $\mu$ is Pareto optimal, then $\pi$ is $\mathfrak{E}$-stochastic nondominated, respecting the preference specified by \ac{pdfa} $\pdfa$.
    \end{theorem}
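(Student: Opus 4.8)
The plan is to argue by contraposition: assuming $\pi$ is not $\mathfrak{E}$-stochastic nondominated, I produce a proper policy for the MOMDP that Pareto-dominates $\mu$, contradicting its Pareto optimality. The backbone is a dictionary between the two settings. First I would record the key structural fact: under a proper policy every terminating path ends in the sink $s_\bot$, and the accompanying PDFA component equals $\delta(\init, \trace(\rho))$; hence reaching a terminal product state lying in $Y_i = \{s_\bot\}\times W_i$ is exactly the event that the generated trace $w$ satisfies $\delta(\init,w)\in W_i$. Consequently, for the lifted policy $\pi$ and any node $W_i$, the probability of reaching $Y_i$ in $\calM$ equals $\Pr^\pi(\{w : \delta(\init,w)\in W_i\})$, and $\pi$ and $\mu$ induce the same distribution over the nodes $\prefnodes$.

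Next I would transport the stochastic-ordering families. Because indifference classes of $\langle\Sigma^\ast,\succeq\rangle$ coincide with the node-classes (Proposition~\ref{prop:node-property} together with the four-case characterization theorem above) and the product preference graph $(\augnodes,\rightsquigarrow)$ is order-isomorphic to $(\prefnodes,\rightsquigarrow)$, the quotient of $\langle\Sigma^\ast,\succeq\rangle$ by indifference is order-isomorphic to $(\augnodes,\rightsquigarrow)$. Under this isomorphism, and using Lemma~\ref{lem:wUp} to rewrite each $\{w\}^\uparrow$ and $\{w\}^\downarrow$ purely in terms of node reachability, each of the three families $\mathfrak{E}(U)\setminus\{U,\emptyset\}$ maps bijectively onto $\mathfrak{E}(\augnodes)\setminus\{\augnodes,\emptyset\}$ (the same definition applied on $\Sigma^\ast$ versus on $\augnodes$); note that although $\Sigma^\ast$ is infinite, these families contain only finitely many distinct sets, since each is a union of node-classes. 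I would close the dictionary with the value identity: if $X_i\in\mathfrak{E}(U)$ corresponds to $\mathbf{Y}_i\in\mathfrak{E}(\augnodes)$ and $Z_i=\bigcup_{Y\in\mathbf{Y}_i}Y$, then $\mathbf{V}^\mu_i=\Pr^\pi(X_i)$ for every $i$.

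With the dictionary in hand, suppose for contradiction $\pi$ is not $\mathfrak{E}$-stochastic nondominated, so some proper $\pi'\in\Pi_{prop}^M$ satisfies $\pi'>_{\mathfrak{E}}\pi$. I would lift $\pi'$ to a proper policy $\mu'$ on the product by projecting each product history onto its underlying $M$-path and applying $\pi'$; this preserves the trace distribution, so $\mathbf{V}^{\mu'}_i=\Pr^{\pi'}(X_i)$. Then $\Pr^{\pi'}(X)\ge\Pr^{\pi}(X)$ for all $X\in\mathfrak{E}(U)$ gives $\mathbf{V}^{\mu'}_i\ge\mathbf{V}^{\mu}_i$ for every $i$; the strict witness $Y$ cannot be $U$ or $\emptyset$ (both have policy-independent probability $1$ and $0$ under proper policies), so it lies in $\mathfrak{E}(U)\setminus\{U,\emptyset\}$ and yields a strict inequality at the corresponding coordinate. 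Hence $\mu'>\mu$. Since memoryless policies attain the Pareto front, a memoryless Pareto-optimal $\mu$ is Pareto-optimal among all proper policies, and this is the desired contradiction.

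The main obstacle I expect is the second paragraph: establishing rigorously that the three order-theoretic families transport exactly between the infinite set $\Sigma^\ast$ and the finite node set $\augnodes$, and that the arbitrary indexing of $\mathfrak{E}(\augnodes)\setminus\{\augnodes,\emptyset\}$ is matched by a consistent indexing of $\mathfrak{E}(U)\setminus\{U,\emptyset\}$ so that the value identity $\mathbf{V}^\mu_i=\Pr^\pi(X_i)$ holds coordinatewise. Everything else—the path/trace bookkeeping and the two policy lifts—is routine given the product construction in Definition~\ref{def:prod} and Lemma~\ref{lem:wUp}.
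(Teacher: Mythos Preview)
Your proposal is correct and follows essentially the same approach as the paper: both establish the value identity $\mathbf{V}^\mu_i=\Pr^\pi(X_i)$ by using the isomorphism between the preference graphs $\calG$ and $G$ together with Lemma~\ref{lem:wUp}, and then invoke the sufficiency of memoryless policies for the Pareto front. The only organizational differences are that you argue by contraposition (explicitly lifting a dominating $\pi'$ to a product policy $\mu'$) and treat all three orderings uniformly via the order-isomorphism, whereas the paper argues the weak case directly in detail and sketches the weak$^\ast$ and strong cases separately; your explicit handling of the trivial sets $U$ and $\emptyset$ is also a detail the paper leaves implicit.
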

    \begin{proof}
    %
    We first provide a detailed proof of the case where $\mathfrak{E} = \mathfrak{E}_{wk}$, that is, where $\mathcal{P}$ is constructed for weak-stochastic ordering.
    We show that if $\mu$ is Pareto optimal then $\pi$ is weak-stochastic nondominated. To facilitate the proof, the following notation is used: Let $\Pr^\mu(\mbox{reach}(H), \calM)$ be the probability of terminating in the set $H \subseteq X$ given the policy $\mu$ for the MOMDP and $\Pr^\pi(\mbox{reach}(P), M)$ be the probability of terminating in the set $P \subseteq S$ given the policy $\pi$ in the original S\ac{tlmdp}. 
    
    

    First, consider that by the construction of the product MDP, Definition~\ref{def:prod}, preference graphs $\calG$ and $G$ are isomorphic, and thus, each $\augnode_i \in \augnodes$ is mapped to a single $\prefnode_i \in \prefnodes$, and vice versa.
    %
    Define $\prefnode_i^+ = 
    \bigcup_{\prefnode: \prefnode_i \rightsquigarrow \prefnode} \prefnode$ for each $\prefnode_i \in \prefnodes$. That is, let $\prefnode_i^+$ include the unions of states in all the nodes that can be reached from $\prefnode_i$ in the preference graph. Note that $\prefnode_i \in \prefnode_i^+$.
    Given that $\calG$ and $G$ are isomorphic, $\augnode_i \rightsquigarrow \augnode_j$ if and only if $\prefnode_i \rightsquigarrow \prefnode_j$ for all $i, j \in \{1, 2, \cdots, N \}$.
    This combined with that $Z_i = \bigcup_{\augnode \in \{\augnode_i\}^\uparrow} \augnode$ for $i \in \{1, \cdots, N\}$ by Definition~\ref{def:goalMDP}, implies that for each $i$,
    \begin{equation}
    \label{eq:Zi_Fi}
       \mathbf{V}_i^\mu =  \Pr^\mu(\reach{Z_i}, \calM) = \Pr^\pi(\mbox{reach}(\prefnode_i^+) , M\}.
    \end{equation}
    
    %
    Next, for each $w, w' \in \Sigma^\ast$ such that $\delta(\init, w)=\delta(\init, w')$, it holds that $\{w\}^\uparrow = \{w'\}^\uparrow$.
    Given this and Lemma~\ref{lem:wUp}, for each $\prefnode_i$ and $w \in \Sigma^\ast$ such that $\delta(\init, w) \in \prefnode_i$, 
    \begin{equation}
    \label{eq:Fi_W}
        \Pr^\pi(\mbox{reach}(\prefnode_i^+), M) =\Pr^\pi(\{w\}^\uparrow ).
    \end{equation}
    
    Finally, given that $\mu$ is a Pareto optimal policy, by Definition~\ref{def:paretoDomPolicy} and Definition~\ref{def:nonDomPolicy}, it means there exists no policy $\mu'$ such that 
     $\mathbf{V}_i^{\mu'} \geq \mathbf{V}_i^\mu$ for all integers $1 \leq i \leq n$ and 
    $\mathbf{V}_j^{\mu'} > \mathbf{V}_j^\mu$ for some integer $1 \leq j \leq n$.
    This, by (\ref{eq:Zi_Fi}) and (\ref{eq:Fi_W}) and that the set of randomized, memoryless policies suffices for the Pareto front of $\calM$, means there exists no policy $\pi' \in \Pi^M_{prop}$ such that $\Pr^{\pi'}(\{w\}^\uparrow ) \geq \Pr^{\pi}(\{w\}^\uparrow )$ for every $w \in \Sigma^*$ and  $\Pr^{\pi'}(\{w'\}^\uparrow ) > \Pr^{\pi}(\{w'\}^\uparrow )$ for some $w' \in  \Sigma^*$.
    This, by Definition~\ref{def:weak_dominating_policies} and Definition~\ref{def:sto_dominance}, means that $\pi$ is weak-stochastic nondominated.

Proof for the case where $\mathcal{P}$ is constructed for weak*-stochastic ordering, that is, where $\mathfrak{E} = \mathfrak{E}_{wk*}$, is very similar, except that wherever $\prefnode^+$ is used we use $\overline{\prefnode}^-$, defined as 
    $\overline{\prefnode}_i^- =\prefnodes\setminus \left(\bigcup_{\prefnode, \prefnode \rightsquigarrow \prefnode_i} \prefnode\right)$. 
For strong-stochastic ordering, we first construct for all subset $V\subseteq \prefnode$, the set $V^\uparrow = \{\prefnode \mid \exists \prefnode_i \in X, \prefnode_i \rightsquigarrow \prefnode \}$,  then exclude any set $V^\uparrow$ if $ V^\uparrow   \ne V$. Then, for each remaining set $V^\uparrow$, the set of states  contained in $V^\uparrow$ is used to define one reachability objective.
  
    \end{proof}

    Given the \ac{momdp}, one can use any existing methods to compute a set of Pareto optimal policies for $\calP$. 
    For a survey of those methods, see~\cite{roijers2013survey}.
    Note that computing the set of all Pareto optimal policies is generally infeasible, and thus, one needs to compute only a subset of them or to approximate them.

 \section{Case Study: Garden}
 \label{sec:caseStudy}
    In this section, we present the results from the planning algorithm for the running example in Figure~\ref{fig:gap_garden} .
In the garden, the actions of the robot are $N$, $S$, $E$, $W$--- corresponding to moving to the cell in the North, South, East, and West side of the current cell, respectively---and $T$ for staying in the current cell. The bee robot initially has a full charge, and using that charge it can fly only $12$ time steps.

\noindent \textbf{Uncertain environment: } A bird roams about  the south east part of the garden, colored yellow in the figure.
    When the bird and the bee are within the same cell, the bee needs to stop flying and hide in its current location until the bird goes away. The motion of the bird is given by a Markov chain. 
    Besides the stochastic movement of the bird, 
the weather is also stochastic and affects the robot's planning.     
    The robot cannot pollinate a flower while raining.
    We assume when the robot starts its task, at the leftmost cell at the bottom row, it is not raining and the probability that it will rain in the next step is $0.2$. 
    This probability increases for the consecutive steps each time by $0.2$ until the rain starts.
    Once the rain started, the probability for the rain to stop in the five following time steps will respectively be $0.2$, $0.4$, $0.6$, $0.8$, and $1.0$, assuming the rain has not already stopped at any of those time steps.

	We implemented this case study in Python and considered two variants of it: Case 1 without stochasticity in the robot's dynamics, and Case 2 with stochasticity. 
	In Case 1, when the robot decides to perform an action to move to a neighboring cell, its actuators will guarantee  that the robot will move to that cell after performing the action.
	In Case 2, the probability that the robot reaches the intended cell is $0.7$, and for each of the unintended directions except the opposite direction, the probably that the robot's actuators move the robot to that unintended direction is $0.1$. If the robot hits the boundary, it stays in its current cell.
	
	All the experiments were performed on a Windows 11 installed on a device with a core i$9$, 2.50GHz CPU and a 32GB memory. 
    \subsection{The Preference \ac{dfa}}
    We first describe how the \ac{pdfa} in Figure~\ref{fig:pdfa} is generated given the preference over \ac{ltlf} formulas $p_1$-$p_4$ from Example~\ref{ex:ltlf-goals}, formulating the preferences in Figure~\ref{fig:gap_garden}.
	Figure~\ref{fig:pdfa_experiment} shows for each of the four LTL$_f$ formula, a DFA that encodes that formula.
	This figure also shows the \ac{pdfa} our algorithm constructs for these four formulas and the associated user preferences, consisting of the underlying DFA and the preference graph, shown respectively in Part (e) and Part (f) of this figure.
        %
    %
	The \ac{pdfa} is generated using our open-source tool \footnote{Tool for constructing \ac{pdfa} from a preference over \ac{ltlf} formulas: \url{https://akulkarni.me/prefltlf2pdfa.html}} implemented in Python3. 
        Note that the \ac{pdfa} in this figure is the same \ac{pdfa} in Figure~\ref{fig:pdfa}, but this figure illustrates how the states $q_0$-$q_5$ in Figure~\ref{fig:pdfa} are constructed.
	The states in the underlying \ac{dfa} represent the formulas satisfied by any word whose trace ends in that state.
	For instance, $\outcomes((3, 2, 3, 4)) = \{p_4\}$. This is because the first three components $3, 2, 3$ are not accepting in their respective \ac{dfa}s, but the fourth component, $4$, is an accepting state in the \ac{dfa} for $p_4$ (see Figure~\ref{fig:pdfa_experiment}). 
	The states of the preference graph encode the partition of $Q$ based on the most-preferred outcomes satisfied by the states. 
	For example, state $(3, 6, 3, 2)$ belongs to the block $(0, 1, 0, 0)$ since the most-preferred outcomes satisfied by $(3, 6, 3, 2)$ is $\{p_2\}$.

    Recall how the \ac{pdfa} produces a ranking over the words in $\Sigma^*$.
	For example, consider two paths in the \ac{mdp} $M$: A path $\rho_1$ that pollinates tulips first and then daisies, and a path $\rho_2$ that first pollinates orchids and then daisies. 
	The trace of $\rho_1$, $\trace(\rho_1)$, induces a path that terminates in state $(4, 5, 3, 2)$ of the underlying DFA, whereas $\trace(\rho_2)$ terminates in $(3, 6, 3, 2)$. 
	Accordingly, state $(4, 5, 3, 2)$ of the \ac{pdfa} belongs to the block of the partition represented by the vertex $(1, 0, 0, 0)$ in the preference graph, and state $(3, 6, 3, 2)$ belongs to the block represented by the vertex $(0, 1, 0, 0)$.
	Since the preference graph has an edge from $(0, 1, 0, 0)$ to $(1, 0, 0, 0)$, it is implied that $\rho_1$ is strictly preferred over $\rho_2$.

 \begin{figure*}[t]
		\centering
		\begin{subfigure}[b]{0.24\textwidth}
			\centering
			\includegraphics[scale=0.3]{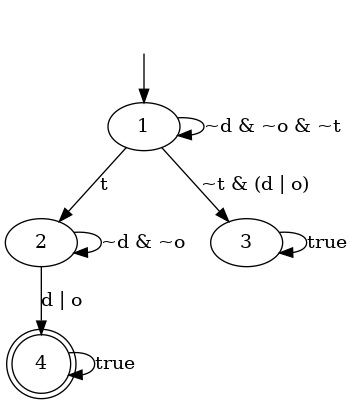}
   \caption{DFA for $p_1$: $(\neg d \wedge \neg o) \until (t \wedge \Next \Eventually ( d \vee o))$.}
		\end{subfigure}
		\hfill
		\begin{subfigure}[b]{0.44\textwidth}
			\centering
			\includegraphics[scale=0.3]{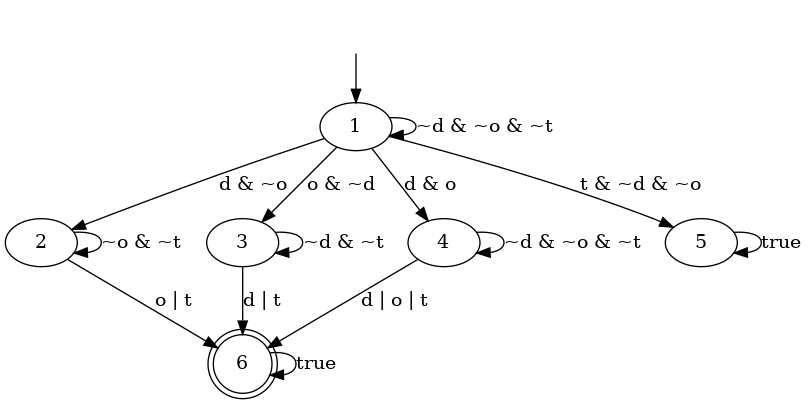}
   \caption{DFA for $p_2$: $\neg t \until ((o \wedge \Next \Eventually (d \vee t)) \vee (d \wedge \Next \Eventually ( o \vee  t)))$.}
		\end{subfigure}
  \hfill
		\begin{subfigure}[b]{0.3\textwidth}
			\centering
			\includegraphics[scale=0.3]{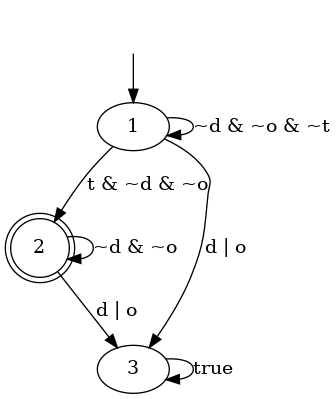}
   \caption{DFA for $p_3$: $(\neg d \wedge \neg o) \until (t \wedge \Always (\neg d \wedge \neg o))$.}
		\end{subfigure}
    \hfill
		\begin{subfigure}[b]{0.27\textwidth}
			\centering
			\includegraphics[scale=0.285]{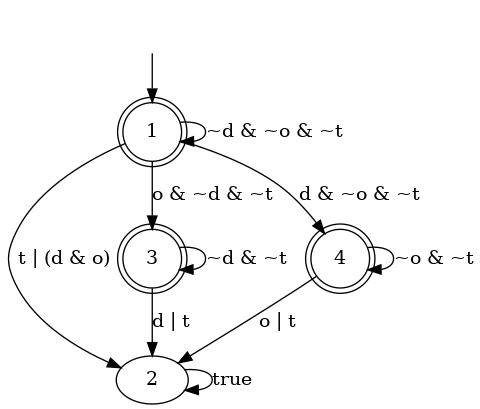}
   \caption{DFA for $p_4$: $\Always(\neg d \land \neg o \land \neg t) \lor (\Eventually o \land \Always(\neg d \land \neg t)) \lor  (\Eventually d \land \Always(\neg o \land \neg t))$.}
		\end{subfigure}
  \hfill
		\begin{subfigure}[b]{0.44\textwidth}
			\centering
			\includegraphics[scale=0.29]{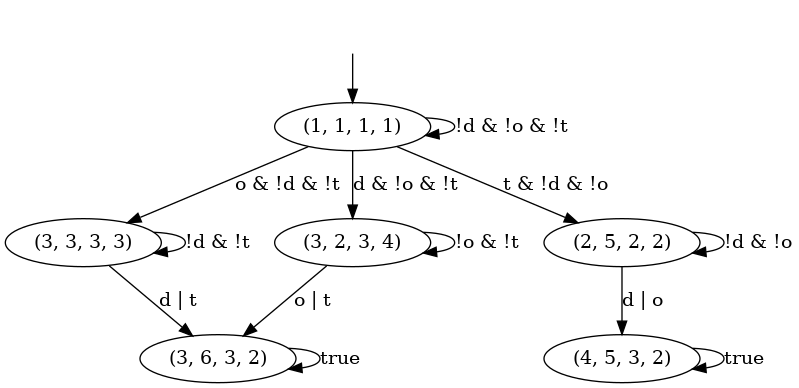}
			\caption{The underlying DFA of the \ac{pdfa}.}
		\end{subfigure}
  \hfill
		\begin{subfigure}[b]{0.28\textwidth}
			\centering
			\includegraphics[scale=0.285]{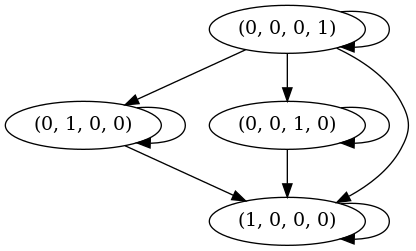}
			\caption{The preference graph of the \ac{pdfa}.}

		\end{subfigure}  
		\caption{\textbf{a)-d)} The DFAs for $p_1-p_4$ for the example Figure~\ref{fig:gap_garden}, which are constrcuted
             by our online tools, available at \url{https://akulkarni.me/prefltlf2pdfa.html}.
  \textbf{e)-f)} The PDFA for the example in Figure~\ref{fig:gap_garden}, which is constructed the implementation of our algorithm for converting a preference model over \ac{ltlf} formulas into a \ac{pdfa}.}
  \label{fig:pdfa_experiment}
	\end{figure*}

	\subsection{Case 1: Deterministic  Action Transitions but   Uncertain Environments}
 For the case when the robot's actions have deterministic outcomes,	the constructed MDP   has $10,460$ states and $280,643$ transitions (its transition function has $280,643$ entries with non-zero probabilities).
	It took $39.06$ seconds for our program to construct the MDP.
	The product MDP had $36,649$ states and $946,467$ transitions.
	The average construction times for the product MDP over $10$ constructions for each of the weak stochastic order, strong stochastic order, and weak* stochastic order were respectively $238.78$, $238.57$, and $238.76$ seconds.

	 Given the preference described in Fig.~\ref{fig:pdfa}, we employ linear scalarization methods to solve the \ac{momdp}.  
Specifically, given a weight vector  $\mathbf{w} \in [0, 1]^N$ , we compute the nondominated policy $\mu_{\mathbf{w}}$, by first setting $V_{\mathbf{w}}(x) = \sum_{1 \leq i \leq N: x \in Z_i} \mathbf{w}[i] $ for each goal state $x \in X_G$, and then by solving the following Bellman equation for the values of  the non-goal states $x \in X \setminus X_G$:
    %
    \begin{equation}
        V_{\mathbf{w}}(x) = \max_{a \in A_x} \sum_{x' \in X} \mathbf{T}(x, a, x')V_{\mathbf{w}}(x').
    \end{equation}

    The policy for those states is recovered from $V_{\mathbf{w}}(x)$ as
    \begin{equation}
        \mu_{\mathbf{w}}(x) = \argmax_{a \in A_x} \sum_{x' \in X} \mathbf{T}(x, a, x')V_{\mathbf{w}}(x').
    \end{equation}

    For each of the three stochastic orderings $\mathfrak{E} \in \{\mathfrak{E}_{st}, \mathfrak{E}_{wk}, \mathfrak{E}_{wk*}\}$, we randomly generated $1,000$ weight vectors and used each one of them to compute a Pareto optimal policy for the MOMDP.  
    For each stochastic orderings $\mathfrak{E}$, the $1,000$ computed Pareto-optimal policies are expected to be $\mathfrak{E}$-nondominated policies.
   From the result, it is noted that for each stochastic orderings $\mathfrak{E}$, none of those $1,000$ computed polices were $\mathfrak{E}$-stochastic dominated by the other polices. This is expected due to Theorem~\ref{thm:pareto-weakstochastic}.

    Next, we provide more detailed analysis for   weak-stochastic non-dominated policies. 
    Recall from Table~\ref{tab:objectives} that the objectives for the weak-stochastic ordering are $\{p_1\}$, $\{p_1, p_2\}$, and
    $\{p_1, p_3\}$.
    Since it is difficult to illustrate the 3D Pareto front, we select a set of policies with similar probabilities (approximately 0.24) of satisfying $p_1$ and then plot the values of these policies for the objectives $\{p_1, p_2\}$ and $\{p_1, p_3\}$. Figure~\ref{fig:case_study_pareto_front} shows the values of those  policies for the objectives $\{p_1, p_2\}$ and $\{p_1, p_3\}$. 
   This figure shows that none of those policies weak-stochastic dominates each other. 

    \begin{figure}[h]
		\centering
		\includegraphics[width=1.0\linewidth]{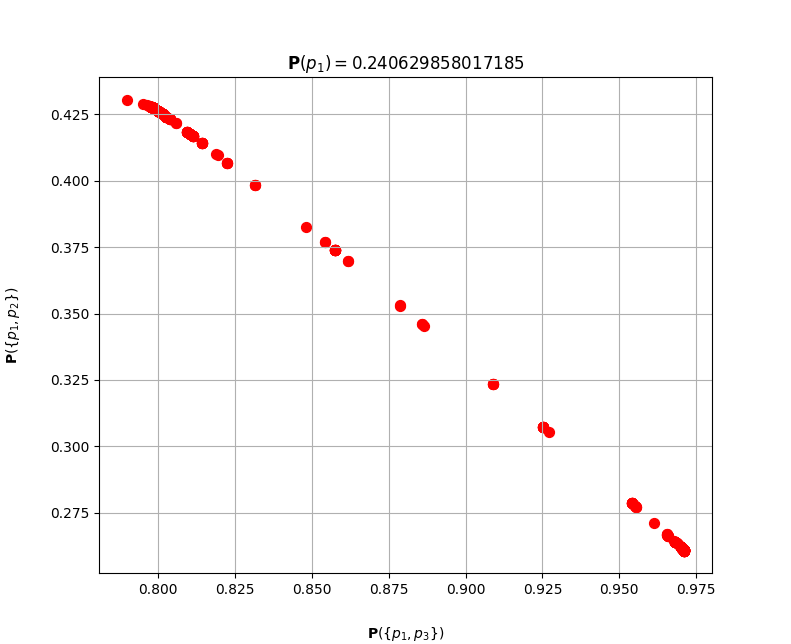}
		\caption{
            The probabilities of satisfying objectives $\{p_1, p_2\}$ and $\{p_1, p_3\}$ by the computed policies for weak-stochastic ordering who satisfy the objective $\{p_1\}$ with probability $0.2406$. 
		}
		\label{fig:case_study_pareto_front}
	\end{figure}

              	    \begin{table}[h!]
	    \setlength{\tabcolsep}{0.2em}
    \centering
    \begin{tabular}{cccc}
    \hline \hline
     & \scriptsize     Weight Vector & \scriptsize \makecell{Value Vector \\ $[\{p_1\}, \{p_1, p_2\}, \{p_1, p_3\}]$} & \scriptsize  Prob. of individual outcomes \\ 
    \hline
   \scriptsize 1 &  \scriptsize [0.466, 0.412, 0.122] & \scriptsize 	[0.110, 0.799, 0.291] & \scriptsize [0.110, 0.689, 0.181, 0.020] \\
   \scriptsize 2 &  \scriptsize [0.363, 0.438, 0.199] & \scriptsize 	[0.146, 0.726, 0.395] & \scriptsize [0.146, 0.580, 0.249, 0.025] \\
    \scriptsize 3 &  \scriptsize [0.207, 0.484, 0.309] & \scriptsize 	[0.201, 0.558, 0.633] & \scriptsize [0.201, 0.357, 0.432, 0.01] \\
    \scriptsize 4 &  \scriptsize [0.134, 0.519, 0.347] & \scriptsize 	[0.173, 0.638, 0.527] & \scriptsize [0.173, 0.465, 0.354, 0.008] \\
    \scriptsize 5 &  \scriptsize [0.141, 0.541, 0.318] & \scriptsize 	[0.068, 0.874, 0.187] & \scriptsize [0.068, 0.806, 0.119, 0.007] \\
    \scriptsize 6 &  \scriptsize [0.434, 0.339, 0.227] & \scriptsize 	[0.241, 0.427, 0.798] & \scriptsize [0.241, 0.186, 0.557, 0.016] \\
    \scriptsize 7 &  \scriptsize [0.428, 0.223, 0.349] & \scriptsize 	[0.239, 0.250, 0.980] & \scriptsize [0.239, 0.011, 0.741, 0.009] \\   
    \scriptsize 8 &  \scriptsize [0.213, 0.395, 0.392] & \scriptsize 	[0.240, 0.307, 0.925] & \scriptsize [0.240, 0.067, 0.685, 0.008] \\
     \scriptsize 9 &  \scriptsize [0.742, 0.208, 0.050] & \scriptsize 	[0.240, 0.432, 0.787] & \scriptsize [0.240, 0.192, 0.547, 0.021] \\
     \scriptsize 10 &  \scriptsize [0.057, 0.488, 0.455] & \scriptsize 	[0.240, 0.398, 0.831] & \scriptsize [0.240, 0.158, 0.591, 0.011] \\
     \hline
    \end{tabular}
        \caption{Ten weak-stochastic nondominated polices computed by our algorithm for the Garden case study. 
    }
    \label{tbl:nondom_policies}
    \end{table} 
    Table~\ref{tbl:nondom_policies} shows $10$ out of those $1,000$ weight vectors along with the value vectors of the weak-stochastic non-dominated polices computed for those weight vectors and the corresponding probabilities those polices assign to the four preferences $p_1$ through $p_4$.
    For each policy, the last column shows a probability distribution over individual outcomes $p_1,\ldots, p_4$ indicating the probabilities of satisfying those formulas (in that order), given the computed policy. 
    The third column shows the multi-objective value vector of each computed policy. It is noted that
    none of those value vectors dominates any other value vector.

Rows $6$ and $9$ of this table show that even if the weight assigned to the most preferred outcome, $p_1$, is significantly higher than the weights assigned to the other preferences, the probability that $p_1$ to be satisfied is still less than $0.25$. 
    This is justified by the fact that the robot's battery capacity supports the robot for only $12$ time steps and thus to achieve $p_1$, the robot must not be stopped by the bird and not encounter rain when it reaches a cell to do pollination. The probability to satisfy these conditions given the environment dynamics is less than $0.25$.

    The probability of $p_4$ to be satisfied in any entry of this table is very small, regardless of the weight vector. This is because $p_4$ has the lowest priority, and any policy would prefer to satisfy other preferences who are assigned higher priorities.

   Although the objectives $\{p_1, p_2\}$ and $\{p_1, p_3\}$ in the eighth and the tenth rows are treated almost equally by the weight vector in terms of importance, the probability that the later to be satisfied is significantly bigger than the probability of the former to be satisfied. This is because the objective $\{p_1 \}$ contains the preference with the highest priority and that those two rows assign a very high weight to this objective, forcing the policy to try to satisfy $p_1$. Further, by attempting to perform $p_1$, the robot has the chance to accomplish $p_3$ within the same attempt, even if it fails to accomplish $p_1$. 
    %
    More precisely, if in attempting to perform the task $p_1$---first tulips and then at least one out of daisies and orchids---the robot succeeds to pollinate the tulips but fails to pollinate the daisies and orchids, then it has already accomplished $p_3$, even though it has failed in accomplishing what it was aiming for---$p_1$.

      \subsection{Case 2: Introducing Stochastic    Robot Dynamics}
	The MDP for this variant has the same number of states, $10,460$, but it has more transitions, $779,396$, which is due to  the stochasticity in   robot's dynamics.
    We consider all the three types of stochastic orderings.
    The MDP construction time for weak-stochastic ordering, strong-stochastic ordering, and weak*-stochastic ordering were respectively $205.92$, $203,43$
    and $238.83$ seconds. The construction time of the product MDP for these three stochastic ordering types were respectively $1,088.58$, $1,088.59$, and $1,089.02$ seconds.
    
    %
	
    Due to the stochasticity in the robot's dynamic, for each kind of stochastic ordering, we expect the policy computed for a specific weight vector 
	to perform ``poorer'' than a policy computed for the same weight vector of Case 1.
	We first compare the two polices for weak-stochastic ordering, given the weight vector $[0.3333, 0.3333, 0.3334]$.
	The probabilities of the preferences to be satisfied for the variant without stochasticity were $[p_1: 0.241, p_2: 0.053, p_3:0.699, p_4: 0.007]$, while those probabilities for the variant with stochasticity were $[p_1: 0.008, p_2: 0.821, p_3:0.120, p_4: 0.051]$.
	While the former policy yields a higher probability of achieving $p_3$, the latter policy puts most of its efforts to satisfy $p_2$ and has a very low probability (close to 0) to satisfy the most preferred goal $p_1$.
Similar observation is made for strong-stochastic ordering using the weight vector $[0.25, 0.25, 0.25, 0.25]$.
    The probabilities of the preferences to be satisfied for the variant without stochasticity were $[p_1: 0.241, p_2: 0.053, p_3: 0.699, p_4: 0.007]$, while those probabilities for the variant with stochasticity were $[p_1: 0.002, p_2: 0.931, p_3: 0.026, p_4: 0.041]$. 

    Lastly, for weak*-stochastic ordering, we have three  objectives $\{p_1, p_2\}$, $\{p_1, p_3\}$, $\{p_1, p_2, p_3\}$ (after removing the empty set and 
     the set  $\{p_1, p_2, p_3, p_4\}$). Given the weight vector $[0.3333, 0.3333, 0.3334]$, 
    the probabilities of the preferences to be satisfied for the variant without stochasticity were $[p_1: 0.241, p_2: 0.053, p_3: 0.699, p_4: 0.007]$, while those probabilities for the variant with stochasticity were $[p_1: 0.000, p_2: 0.953, p_3: 0.007, p_4: 0.040]$. 

    Given the same weight vector but different stochastic orderings, we observed that the probability of satisfying $p_2$ in the stochastic variant is much larger ($\approx$ 16 times more likely)  than that of the deterministic variant. This result is mainly due to the difficulty in reaching tulips given the coupled inherent stochastic dynamics and uncertain environmental factors (clouds and the bird). Because the chance of reaching tulips is very small, the probability of satisfying $p_1$ or $p_3$ -- both require tulips to be visited --- are equally small. As a result, the preference-based planner (across all three stochastic orders)  satisfies $p_2$ with a much higher probability since $p_2$ only requires two flower types to be pollinated. This experimental comparison demonstrates the flexibility of preference-based planners to adjust the goal based on changes in the system and environment dynamics.
 

    \section{Conclusions and Future Work}
    In this paper, we introduced a formal language for specifying user's partially-ordered preferences over temporal goals expressed in LTL$_f$. We developed an algorithm to convert the user preference over LTL$_f$ formulas into an automaton with a preorder over the acceptance conditions. To synthesize a most preferred policy in a stochastic environment, we utilized stochastic ordering to translate a partially-ordered user's preference to a preorder over probabilistic distributions over the system trajectories. This allowed us to rank the policies based on the partially-ordered user's preference.     
    Leveraging the automaton structure, we proved that computing a most-preferred policy can be reduced to finding a Pareto-optimal policy in a multi-objective \ac{mdp} augmented with the automaton states.

This work provides fundamental algorithms and principled approach for preference-based probabilistic planning with partially-ordered temporal logic objectives in stochastic systems.   
    A direction for future work will be to extend the planning with preference over temporal goals that are satisfied in infinite time, for instance, recurrent properties and other more general properties in temporal logic. For robotic applications, it would be of practical interest to design a conversational-AI interface that elicits  human preferences and translating natural language specifications into the preference model, and thus facilitate human-on-the-loop planning. 
    \section*{Funding}

The author(s) disclosed receipt of the following financial support for the research, authorship, and/or publication of this article: This work was supported by   the Air Force Office of Scientific Research under award number FA9550-21-1-0085 and in part by NSF under award numbers 2024802 and 2207759.
 
\bibliographystyle{SageH}
	\bibliography{sample}

\begin{thebibliography}{10}

\bibitem{amorese2023optimal}
Peter Amorese and Morteza Lahijanian.
\newblock Optimal cost-preference trade-off planning with multiple temporal tasks.
\newblock {\em arXiv preprint arXiv:2306.13222}, 2023.

\bibitem{baier2004controller}
Christel Baier, Marcus Gr{\"o}{\ss}er, Martin Leucker, Benedikt Bollig, and Frank Ciesinski.
\newblock Controller synthesis for probabilistic systems.
\newblock In {\em Exploring New Frontiers of Theoretical Informatics: IFIP 18th World Computer Congress TC1 3rd International Conference on Theoretical Computer Science (TCS2004) 22--27 August 2004 Toulouse, France}, pages 493--506. Springer, 2004.

\bibitem{baier2008principles}
Christel Baier and Joost-Pieter Katoen.
\newblock {\em Principles of model checking}.
\newblock MIT press, 2008.

\bibitem{baier2008planning}
Jorge~A Baier and Sheila~A. McIlraith.
\newblock Planning with {{Preferences}}.
\newblock {\em AI Magazine}, 29(4):25, 2008.

\bibitem{bertsekas1991analysis}
Dimitri~P Bertsekas and John~N Tsitsiklis.
\newblock An analysis of stochastic shortest path problems.
\newblock {\em Mathematics of Operations Research}, 16(3):580--595, 1991.

\bibitem{bienvenu2011specifying}
Meghyn Bienvenu, Christian Fritz, and Sheila~A McIlraith.
\newblock Specifying and computing preferred plans.
\newblock {\em Artificial Intelligence}, 175(7-8):1308--1345, 2011.

\bibitem{cai2021optimal}
Mingyu Cai, Shaoping Xiao, Zhijun Li, and Zhen Kan.
\newblock Optimal probabilistic motion planning with potential infeasible {LTL} constraints.
\newblock {\em IEEE Transactions on Automatic Control}, 68(1):301--316, 2021.

\bibitem{cardona2023mixed}
Gustavo~A Cardona, Disha Kamale, and Cristian-Ioan Vasile.
\newblock Mixed integer linear programming approach for control synthesis with weighted signal temporal logic.
\newblock In {\em Proceedings of the 26th ACM International Conference on Hybrid Systems: Computation and Control}, pages 1--12, 2023.

\bibitem{chatterjee2006markov}
Krishnendu Chatterjee, Rupak Majumdar, and Thomas~A Henzinger.
\newblock Markov decision processes with multiple objectives.
\newblock In {\em Annual symposium on theoretical aspects of computer science}, pages 325--336. Springer, 2006.

\bibitem{cormen2022introduction}
Thomas~H Cormen, Charles~E Leiserson, Ronald~L Rivest, and Clifford Stein.
\newblock {\em Introduction to algorithms}.
\newblock MIT press, 2022.

\bibitem{de2013linear}
Giuseppe De~Giacomo and Moshe~Y Vardi.
\newblock Linear temporal logic and linear dynamic logic on finite traces.
\newblock In {\em IJCAI'13 Proceedings of the Twenty-Third international joint conference on Artificial Intelligence}, pages 854--860. Association for Computing Machinery, 2013.

\bibitem{dushnik1941partially}
Ben Dushnik and Edwin~W Miller.
\newblock Partially ordered sets.
\newblock {\em American journal of mathematics}, 63(3):600--610, 1941.

\bibitem{fishburn1985interval}
Peter~C Fishburn.
\newblock Interval graphs and interval orders.
\newblock {\em Discrete mathematics}, 55(2):135--149, 1985.

\bibitem{fu2021probabilistic}
Jie Fu.
\newblock Probabilistic planning with preferences over temporal goals.
\newblock In {\em 2021 American Control Conference (ACC)}, pages 4854--4859. IEEE, 2021.

\bibitem{hansson2001structure}
Sven~Ove Hansson.
\newblock {\em The structure of values and norms}.
\newblock Cambridge University Press, 2001.

\bibitem{stanford2022preferences}
Sven~Ove Hansson and Till Grüne-Yanoff.
\newblock {\em Preferences}.
\newblock The Stanford Encyclopedia of Philosophy, 2022.

\bibitem{kulkarni2022opportunistic}
Abhishek~Ninad Kulkarni and Jie Fu.
\newblock Opportunistic qualitative planning in stochastic systems with preferences over temporal logic objectives.
\newblock {\em arXiv preprint arXiv:2203.13803}, 2022.

\bibitem{Lahijanian2016}
Morteza Lahijanian and Marta Kwiatkowska.
\newblock Specification revision for {{Markov}} decision processes with optimal trade-off.
\newblock In {\em Proc. 55th Conference on Decision and Control ({{CDC}}'16)}, pages 7411--7418, 2016.

\bibitem{liProbabilisticPlanningPrioritized2023}
Lening Li, Hazhar Rahmani, and Jie Fu.
\newblock Probabilistic {Planning} with {Prioritized} {Preferences} over {Temporal} {Logic} {Objectives}.
\newblock volume~1, pages 189--198, August 2023.
\newblock ISSN: 1045-0823.

\bibitem{manna2012temporal}
Zohar Manna and Amir Pnueli.
\newblock {\em The temporal logic of reactive and concurrent systems: Specification}.
\newblock Springer Science \& Business Media, 2012.

\bibitem{masseyStochasticOrderingsMarkov1987}
William~A. Massey.
\newblock Stochastic {Orderings} for {Markov} {Processes} on {Partially} {Ordered} {Spaces}.
\newblock {\em Mathematics of Operations Research}, 12(2):350--367, 1987.
\newblock Publisher: INFORMS.

\bibitem{mehdipourSpecifyingUserPreferences2021}
Noushin Mehdipour, Cristian-Ioan Vasile, and Calin Belta.
\newblock Specifying {{User Preferences Using Weighted Signal Temporal Logic}}.
\newblock {\em IEEE Control Systems Letters}, 5(6):2006--2011, December 2021.

\bibitem{ok2002utility}
Efe~A Ok et~al.
\newblock Utility representation of an incomplete preference relation.
\newblock {\em Journal of Economic Theory}, 104(2):429--449, 2002.

\bibitem{rahmani2023probabilistic}
Hazhar Rahmani, Abhishek~N Kulkarni, and Jie Fu.
\newblock Probabilistic planning with partially ordered preferences over temporal goals.
\newblock In {\em 2023 IEEE International Conference on Robotics and Automation (ICRA)}, pages 5702--5708. IEEE, 2023.

\bibitem{rahmani2019optimal}
Hazhar Rahmani and Jason~M O’Kane.
\newblock Optimal temporal logic planning with cascading soft constraints.
\newblock In {\em 2019 IEEE/RSJ International Conference on Intelligent Robots and Systems (IROS)}, pages 2524--2531. IEEE, 2019.

\bibitem{rahmani2020what}
Hazhar Rahmani and Jason~M O’Kane.
\newblock What to do when you can’t do it all: Temporal logic planning with soft temporal logic constraints.
\newblock In {\em 2020 IEEE/RSJ International Conference on Intelligent Robots and Systems (IROS)}, pages 6619--6626. IEEE, 2020.

\bibitem{roijers2013survey}
Diederik~M Roijers, Peter Vamplew, Shimon Whiteson, and Richard Dazeley.
\newblock A survey of multi-objective sequential decision-making.
\newblock {\em Journal of Artificial Intelligence Research}, 48:67--113, 2013.

\bibitem{thomson1984trolley}
Judith~Jarvis Thomson.
\newblock The trolley problem.
\newblock {\em Yale LJ}, 94:1395, 1984.

\bibitem{tumova2013least}
Jana Tumova, Gavin~C Hall, Sertac Karaman, Emilio Frazzoli, and Daniela Rus.
\newblock Least-violating control strategy synthesis with safety rules.
\newblock In {\em Proceedings of the 16th international conference on Hybrid systems: computation and control}, pages 1--10. ACM, 2013.

\bibitem{wongpiromsarn2021}
Tichakorn Wongpiromsarn, Konstantin Slutsky, Emilio Frazzoli, and Ufuk Topcu.
\newblock Minimum-violation planning for autonomous systems: Theoretical and practical considerations.
\newblock In {\em 2021 American Control Conference (ACC)}, pages 4866--4872, 2021.

\end{thebibliography}

\end{document}